\def\eqref#1{(\ref{#1})}
\def\1{\bm{1}}
\def\rx{{\textnormal{x}}}
\def\rvx{{\mathbf{x}}}
\def\vone{{\bm{1}}}
\def\vtheta{{\bm{\theta}}}
\def\vphi{{\bm{\phi}}}
\def\vc{{\bm{c}}}
\def\vq{{\bm{q}}}
\def\vx{{\bm{x}}}
\def\vy{{\bm{y}}}
\def\vz{{\bm{z}}}
\def\mA{{\bm{A}}}
\def\mB{{\bm{B}}}
\def\mD{{\bm{D}}}
\def\mH{{\bm{H}}}
\def\mI{{\bm{I}}}
\def\mK{{\bm{K}}}
\def\mL{{\bm{L}}}
\def\mP{{\bm{P}}}
\def\mQ{{\bm{Q}}}
\def\mV{{\bm{V}}}
\def\mW{{\bm{W}}}
\def\mX{{\bm{X}}}
\def\mY{{\bm{Y}}}
\def\mZ{{\bm{Z}}}
\DeclareMathAlphabet{\mathsfit}{\encodingdefault}{\sfdefault}{m}{sl}
\SetMathAlphabet{\mathsfit}{bold}{\encodingdefault}{\sfdefault}{bx}{n}
\def\gD{{\mathcal{D}}}
\def\gG{{\mathcal{G}}}
\def\gH{{\mathcal{H}}}
\def\gL{{\mathcal{L}}}
\def\gN{{\mathcal{N}}}
\def\gS{{\mathcal{S}}}
\def\gV{{\mathcal{V}}}
\def\gX{{\mathcal{X}}}
\def\gY{{\mathcal{Y}}}
\def\sP{{\mathbb{P}}}
\newcommand{\E}{\mathbb{E}}
\newcommand{\R}{\mathbb{R}}
\newcommand{\abs}[1]{\left\lvert #1 \right\rvert}
\newcommand{\norm}[1]{\left\lVert#1\right\rVert}
\newcommand{\T}{\top}
\newcommand{\F}{\textup{F}}
\newcommand{\tr}{\textup{tr}}
\def\yv{{y_s(v)}}
\def\xvs{{\rvx_s(v)}}
\def\xvsp{{\{\rvx_s(u)\}_{u \sim v}}}
\def\xvt{{\rvx_t(v')}}
\begin{document}
\title{Enhancing Node-Level Graph Domain Adaptation by Alleviating Local Dependency}

\author{Xinwei Tai}
\authornote{Both authors contributed equally to this research.}
\email{tttxw2004@gmail.com}
\orcid{0009-0007-4933-2224}
\affiliation{%
  \institution{Huazhong University of Science and Technology\\ School of Cyber Science and Engineering\\ Hubei Key Laboratory of Distributed System Security\\ Hubei Engineering Research Center on Big Data Security}
  \city{Wuhan}
  \country{China}
}
\affiliation{%
  \institution{Zhongguancun Academy}
  \city{Beijing}
  \country{China}
}

\author{Dongmian Zou}
\authornotemark[1]
\email{dongmian.zou@duke.edu}
\orcid{0000-0002-5618-5791}
\affiliation{%
  \institution{Zu Chongzhi Center, Digital Innovation Research Center\\ Duke Kunshan University}
  \city{Kunshan}
  \country{China}
}

\author{Hongfei Wang}
\authornote{Corresponding author.\\ This work was supported by the National Natural Science Foundation of China (NSFC) under Grant No. 12301117, Grant No. 62372198 and Grant No. 62172173.}
\email{hongfei@hust.edu.cn}
\orcid{0000-0001-5377-5924}
\affiliation{%
  \institution{Huazhong University of Science and Technology\\ School of Cyber Science and Engineering\\ Hubei Key Laboratory of Distributed System Security\\ Hubei Engineering Research Center on Big Data Security}
  \city{Wuhan}
  \country{China}
}


\begin{abstract}
Recent years have witnessed significant advancements in machine learning methods on graphs. However, transferring knowledge effectively from one graph to another remains a critical challenge. This highlights the need for algorithms capable of applying information extracted from a source graph to an unlabeled target graph, a task known as unsupervised graph domain adaptation (GDA). One key difficulty in unsupervised GDA is conditional shift, which hinders transferability. In this paper, we show that conditional shift can be observed only if there exists local dependencies among node features. To support this claim, we perform a rigorous analysis and also further provide generalization bounds of GDA when dependent node features are modeled using markov chains. Guided by the theoretical findings, we propose to improve GDA by decorrelating node features, which can be specifically implemented through decorrelated GCN layers and graph transformer layers. Our experimental results demonstrate the effectiveness of this approach, showing not only substantial performance enhancements over baseline GDA methods but also clear visualizations of small intra-class distances in the learned representations. Our code is available at  \url{https://github.com/TechnologyAiGroup/DFT}.
\end{abstract}

\begin{CCSXML}
<ccs2012>
   <concept>
       <concept_id>10010147.10010257.10010258.10010262.10010277</concept_id>
       <concept_desc>Computing methodologies~Transfer learning</concept_desc>
       <concept_significance>500</concept_significance>
       </concept>
   <concept>
       <concept_id>10010147.10010257.10010258.10010262.10010279</concept_id>
       <concept_desc>Computing methodologies~Learning under covariate shift</concept_desc>
       <concept_significance>500</concept_significance>
       </concept>
   <concept>
       <concept_id>10002950.10003624.10003633.10010917</concept_id>
       <concept_desc>Mathematics of computing~Graph algorithms</concept_desc>
       <concept_significance>500</concept_significance>
       </concept>
   <concept>
       <concept_id>10010147.10010257.10010293.10010294</concept_id>
       <concept_desc>Computing methodologies~Neural networks</concept_desc>
       <concept_significance>500</concept_significance>
       </concept>
 </ccs2012>
\end{CCSXML}

\ccsdesc[500]{Computing methodologies~Transfer learning}
\ccsdesc[500]{Computing methodologies~Learning under covariate shift}
\ccsdesc[500]{Mathematics of computing~Graph algorithms}
\ccsdesc[500]{Computing methodologies~Neural networks}

\keywords{Graph neural networks, Domain adaptation, Transfer learning, Dependency, Transformer}

\received{20 February 2007}
\received[revised]{12 March 2009}
\received[accepted]{5 June 2009}

\maketitle

\section{Introduction}\label{sec:intro}
Graph neural networks (GNNs) have recently emerged as a fundamental framework for extracting information from graph-structured data. In many real-world applications, it is often crucial to transfer knowledge from one graph to another. However, graphs often exhibit distinct structures and topologies, inevitably leading to variations in their underlying data distributions. Indeed, it has been shown difficult to obtain generalizable models for learning in applications such as social networks~\cite{fan2019graph}, citation networks~\cite{kipf2016semi}, or biological interaction networks~\cite{han2019gcn}, where each graph represents inherently complex real-world systems and encapsulates unique structural characteristics and feature distributions. This underscores the importance of node-level graph domain adaptation (GDA), where the same predictive tasks are performed on both a source graph domain and a target graph domain, despite differences in their graph structures and associated feature distributions. In this paper, we focus on the unsupervised setting, where no training label is available in the target domain.

In GDA, the challenge of transferring knowledge is amplified by the fact that node features are not independent, which is in contrast to traditional domain adaptation problems. Indeed, traditional domain adaptation approaches typically rely on generalization bounds that assume independent and identically distributed (i.i.d.) data within each domain~\cite{ganin2015unsupervised}. Pioneering works on GDA, such as UDAGCN~\cite{wu2020unsupervised} and AdaGCN~\cite{dai2022graph}, have focused on effective feature extraction on graphs, but have not explicitly addressed the issue of feature dependency. Indeed, their reliance on the GCN model~\cite{kipf2016semi} inherently exacerbates this issue as GCN further introduces local dependencies in node representations. 

More recent studies~\cite{you2023graph, zhu2023explaining} have noted that the standard ``covariate shift assumption, where the label distribution conditioned on input features remains unchanged across domains, is often insufficient to explain distribution shifts in graph domain adaptation. In practice, conditional shift, where the label distribution conditioned on input graphs differs between the source and target domains, may also be present. For example, in high energy physics, conditional shifts can arise from differences in pileup levels~\cite{liu2023structural,liu2024pairwise}. While covariate shift is only one of several assumptions used in domain adaptation and there exist works that discuss other settings~\cite{garg2023rlsbench, tsai2024proxy}, it is widely satisfied by many real datasets and used in many GDA papers~\cite{dai2022graph, wu2020unsupervised, liu2023structural, you2023graph, wu2023noniid, liu2024rethinking, liu2024pairwise}. For instance, in citation networks, where node features are often sparse bag-of-words vectors~\cite{dai2022graph}, the label distribution given features is expected to remain largely consistent across datasets. The problem is that assuming covariate shift does not preclude the presence of conditional shift. The latter can still arise due to the graph structure itself, where node features and representations are inherently dependent on their local neighborhoods.

These considerations naturally lead to the following research questions on conditional shifts for GDA:
\textbf{RQ1.} What does the presence of conditional shifts reveal about dependency of node representations for the underlying graph data?
\textbf{RQ2.} How does dependency of node representations impact the performance of GDA? \textbf{RQ3.} How can we improve performance of GDA methods accordingly? 

In this paper, we address the above questions through a rigorous study of the relationship between node representation dependencies and GDA performance. Our main contributions are as follows:
\begin{itemize}[listparindent=0pt, leftmargin=*]
    \item We provide a theoretical analysis of unsupervised node-level GDA. We show that the presence of conditional shifts necessarily implies interdependent node representations.
    \item We derive generalization bounds to understand how local dependencies hinder generalization performance. We further analyze why GCN is not an ideal backbone model by demonstrating that GCN propagation amplifies feature interdependency.
    \item We show that simply improving GCN layers by introducing de-correlation leads to noticeable improvements in the classical UDAGCN model, resulting in state-of-the art performance.
\end{itemize}

\section{Related Works}\label{sec:related_works}
\subsection{GNN and Generalizability}
Recent research on GNNs has widely adopted the neural message-passing scheme pioneered by \citet{gilmer2017neural}. In this scheme, each GNN layer aggregates and processes information from the neighbors of each node. Notable message-passing GNNs include well-known architectures such as ~\cite{kipf2016semi, velivckovic2017graph, xu2018powerful}. On the other hand, transformers~\cite{vaswani2017attention} have gained widespread adoption and have been extended to accommodate graph-structured data~\cite{rong2020self, hussain2022global, ying2021transformers, kreuzer2021rethinking, min2022transformer,dwivedi2020generalization}. Unlike previous attention scheme in GAT~\cite{velivckovic2017graph} which also yields message passing, graph transformers employ self-attention. Their successes lie in their ability to capture global patterns, effectively addressing oversmoothing~\cite{cai2020note, chen2020measuring} and oversquashing~\cite{alon2021bottleneck} issues observed in message-passing GNNs. 

To understand the generalizability of GNNs, recent works such as~\cite{esser2021learning} and \cite{tang2023towards} derived upper bounds for their VC dimensions and Rademacher complexities. However, these studies have primarily considered generalization as an intrinsic property of GNNs and thus do not directly extend to GDA schemes. Recent research has also delved into the realm of out-of-distribution generalization in GNNs~\cite{li2022outofdistribution}. While various techniques have emerged to enhance out-of-distribution generalization~\cite{zhu2021shift, wu2022handling}, they do not inherently constitute comprehensive GDA strategies. On the other hand, the oversmoothing phenomenon manifests as features become increasingly similar as GNNs progress to deeper layers, which is known to hinder GNN generalization~\cite{oono2020Graph}. A related issue is overcorrelation~\cite{jin2022feature, liu2023enhancing}. However, the term ``correlation'' there specifically applies to different feature dimensions, not different nodes.

\subsection{Graph Domain Adaptation}
Traditional domain adaptation assumes i.i.d. data points~\cite{ben2006analysis, ben2010theory, tzeng2014deep,long2015learning, ganin2015unsupervised, ganin2016domain, zhang2019bridging, chang2022unified}. However, node features on graphs show more complex interdependency relations. To address GDA problems, recent works adopted various techniques to facilitate aligning the latent distributions in both source and target domains. For instance, \citet{song2020domain, dai2022graph} used different adversarial approaches with GNN feature extractors. \citet{wu2020unsupervised} used attention mechanism on both graph adjacency matrices and point-wise mutual information matrices to effectively extract latent information. \citet{cai2021graph} assumed that graph datasets are generated using independent latent variables and designed various GNN modules to learn these latent variables. In a more recent development, \citet{wu2023noniid} used Weisfeiler-Lehman graph isomorphism to derive a generalization bound and designed a method by minimizing kernel distance induced by GNN. \citet{liu2024rethinking} leveraged the intrinsic generalizability of GNNs in GDA tasks. GDA methods have also been used in other settings such as semi-supervised learning~\cite{zhu2021shift} and reinforcement learning~\cite{wu2022handling, yang2022learning}. However, these methods have primarily focused on the effective utilization of GNNs and may overlook the inherent interdependency nature of graph-structured data.

Another line of recent research ascribed the difficulties encountered in GDA to shifts of conditional distributions and proposed various strategies to alleviate these issues. Specifically, \citet{liu2023structural} scrutinized the shift in graph structures conditioned on class labels. More recently, \citet{liu2024pairwise} used edge weights and label weights to align the shifts in conditional feature distribution and label distribution. \citet{you2023graph} delved into the shift of label distributions conditioned on graph structures, and \citet{zhu2023explaining} explored shifts in label distribution conditioned on node features. Unlike these works, we analyze that local dependency of node representations is indicated by observing conditional shifts and therefore propose approaches specifically targeted at reducing such dependency.

\section{Main Idea and Findings}\label{sec:analysis}
\subsection{Settings}
We assume that node features and their labels are drawn from the same representation space $\gX \subset \R^D$ and the same label space $\gY$ with a distance metric $d_\gY$, respectively, for both source and target domains. We assume a groundtruth labeling function $h_0: \gX \to \gY$. We denote the data distribution in the source and target domains to be $\mu_s$ and $\mu_t$, respectively. More specifically, these represent the marginal distributions, which describe how the input data is distributed in each domain ignoring any associated labels or outputs. Assume that the training data in the source domain have an underlying attributed graph $\gG$ with vertex set $\gV$, where $\abs{\gV} = N_s$. Both node features and labels $\{\vx_s(v), y_s(v)\}_{v \in \gV}$ are available. On the other hand, in the target domain, we have another attributed graph $\gG'$ with vertex set $\gV'$, where $\abs{\gV'} = N_t$. We consider the unsupervised setting, which means that in the target domain, only node features $\{\vx_t(v')\}_{v' \in \gV'}$ are available but not their labels. We assume that the vertices are indexed and we can represent the features in feature matrices $\mX_s$ and $\mX_t$. We remark that, in general, the node features are non-i.i.d. since they are dependent according to the underlying graphs. We assume that the learned predicting function $h: \gX \to \gY$ is taken from a hypothesis class $\gH$. In each of domain, the true error functions on $\gH$ is
\begin{equation}\label{eq:def_erh}
    \epsilon_r(h) := \E_{\mu_r} \left[ d_\gY(h_0, h) \right], ~h \in \gH, \quad r = s,  t.
\end{equation}
In the source domain, the empirical error function is
\begin{equation}\label{eq:def_eshhat}
    \hat{\epsilon}_s(h) := \frac{1}{N_s} \sum_{v \in \gV} d_\gY \left( h(\vx_s(v)) , h_0(\vx_s(v)) \right).
\end{equation}
More notations and a tabular summary of all notations are provided in  Table~\ref{tab:Table 1}. 
\begin{table}[t]
    \centering
    \caption{List of notations}
    \label{tab:Table 1}
    \scalebox{0.9}{
    \begin{tabular}{c|c}
        \toprule
        Notation & Description \\
        \midrule
        $\gG$ & source graph\\
        $\gG'$ & target graph\\
        $\gV$, $\gV'$ & node set of $\gG$, $\gG'$\\
        $N_s$, $N_t$ & number of nodes in $\gG$, $\gG'$\\
        $\mX_s$, $\mX_t$ & feature matrices of $\gG$, $\gG'$\\
        $\mA_s$, $\mA_t$ & adjacency matrices of $\gG$, $\gG'$\\
        $\mP_s$, $\mP_t$ & PPMI matrices of $\gG$, $\gG'$\\
        \midrule
        $\gX$, $\gY$ & sample space and label space\\
        $v \sim u$ & $v$ is connected to $u$ in the graph\\
        $\vx_s(v)$, $\vx_t(v')$ & node features\\
        $\mu_s$, $\mu_t$ & marginal distributions for node features\\
        $P_s, P_t, p_s, p_t$ & probabilities and densities\\
        $D$ & feature dimension\\
        $d$ & VC dimension\\
        $C$ & number of classes\\
        $y_s(v)$, $y_t(v')$ & node class labels\\
        $\gH$ & hypothesis class\\
        $\epsilon_s$, $\epsilon_t$ & true error functions\\
        $\hat{\epsilon}_s$ & empirical error function\\
        $\epsilon^*$ & minimum sum of true errors in both domains\\
        $t_\textup{mix}$ & mixing time\\
        $\Lambda(\gG)$ & forest complexity of $\gG$\\
        \midrule
        $\mZ_s$, $\mZ_t$ & source and target graph node representations\\
        $\gamma$, $\lambda_1$, $\lambda_2$ & hyperparameters in decorrelated GCN\\
        $\lambda_t$, $\lambda_\textup{critic}$, $\lambda_\textup{gp}$ & hyperparameters for various losses\\
        \midrule
        $\norm{\cdot}_{\operatorname{F}}$ & Frobenius norm\\
        $\circ$ & Hadamard product\\
        \bottomrule
    \end{tabular}
    }
\end{table}

\subsection{Observation of Conditional Shift Implies Interdependency}
In domain adaptation, one widely used assumption is the covariate shift assumption, where $\sP_s(y \vert \rvx) = \sP_t(y \vert \rvx)$ for $\rvx \in \gX$ and $y \in \gY$. In GDA, covariate shift is still valid for many real data, as evidenced in~\cite{dai2022graph, wu2020unsupervised, liu2023structural, you2023graph, wu2023noniid, liu2024rethinking, liu2024pairwise}. To validate the adoption of this assumption, we further present detailed empirical analysis in Appendix~\ref{app:covariate_shift}.

For graph data, another natural assumption is conditional shift, which states that $\sP_s(y \vert \gG) \neq \sP_t(y \vert \gG)$~\cite{you2023graph}. Since $\gG$ contains information about not only node representations but also edge connections, the conditioning factors differ in these two assumptions. We analyze the implications of conditional shift as follows.

In a graph, a node $v$ is directly influenced by its neighbors via edges. This structural dependency leads to a fundamental difference between standard domain adaptation and GDA. Specifically, we can represent the effect of $\gG$ according to the edges: 
\begin{equation}\label{eq:cond_shift_observ}
\begin{aligned}
    & \sP_s ( y_s(v) = y ~\big\vert~ \rvx_s(v) = \vx, \{\rvx_s(u)\}_{u \sim v} \big) \neq \\ & \qquad \sP_t \big( y_t(v') = y ~\big\vert~ \rvx_t(v') = \vx, \{\rvx_t(u')\}_{u' \sim v'} \big),
\end{aligned}
\end{equation}
where $v \in \gV$ and $v' \in \gV'$ are nodes from source and target graphs, and ``$\sim$'' denotes edge connection. 
An immediate consequence of \eqref{eq:cond_shift_observ} is the interdependency of node features, stated as follows.

\begin{theorem}\label{lemma:conditional_shift}
Under the covariate shift assumption, if \eqref{eq:cond_shift_observ} holds, then the node features cannot be independently sampled.
\end{theorem}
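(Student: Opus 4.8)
The plan is to argue by contraposition: assume the node features are sampled independently within each domain, and show that \eqref{eq:cond_shift_observ} must then fail, i.e.\ the two conditional label distributions coincide. The engine of the argument is that the groundtruth label $y_s(v) = h_0(\vx_s(v))$ depends only on the node's own feature, so conditioning on neighbors can matter only through a statistical coupling between a node and its neighbors --- precisely the coupling that independence rules out.

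Concretely, I would first fix a node $v \in \gV$ together with its neighbors $\{u : u \sim v\}$ and rewrite the left-hand conditional probability of \eqref{eq:cond_shift_observ} via the definition of conditional probability. Because $h_0$ is deterministic, the event $\{y_s(v) = y\}$ is measurable with respect to $\rvx_s(v)$ alone. Under the independence assumption, $\rvx_s(v)$ is independent of $\{\rvx_s(u)\}_{u \sim v}$, hence $y_s(v)$ is independent of $\{\rvx_s(u)\}_{u \sim v}$ as well, so conditioning on the neighbors adds no information:
\begin{equation*}
\begin{aligned}
&\sP_s\big(y_s(v) = y \mid \rvx_s(v) = \vx, \{\rvx_s(u)\}_{u \sim v}\big) \\
&\qquad = \sP_s\big(y_s(v) = y \mid \rvx_s(v) = \vx\big).
\end{aligned}
\end{equation*}
The identical computation in the target domain yields the same reduction with $\sP_t$. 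I would then invoke the covariate shift assumption $\sP_s(y \mid \rvx) = \sP_t(y \mid \rvx)$, which equates the two reduced single-node conditionals across domains. Chaining the two reductions with covariate shift forces the two sides of \eqref{eq:cond_shift_observ} to be equal, contradicting the assumed strict inequality; therefore the features cannot have been sampled independently.

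I expect the only delicate step to be the conditional-independence reduction. One must be careful that ``independently sampled'' is taken in the sense that each node's feature is independent of the \emph{collection} of its neighbors' features (not merely pairwise independent), so that the neighbor $\sigma$-algebra genuinely carries no information about $y_s(v)$ beyond $\rvx_s(v)$. A secondary point worth stating explicitly is that the reduction relies on $h_0$ being shared across domains and on the label being a function of the node feature alone; if label noise were present the same conclusion would still follow, provided the noise depends only on the node's own feature, and I would flag this as a modeling assumption rather than attempt to derive it.
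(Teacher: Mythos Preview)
Your proposal is correct and follows essentially the same contrapositive argument as the paper: assume independence, reduce the neighbor-conditioned label distribution to the single-node conditional $\sP_r(y\mid\rvx)$, then invoke covariate shift to equate the two sides of \eqref{eq:cond_shift_observ}. The only cosmetic difference is that the paper carries out the reduction via an explicit Bayes' theorem manipulation of the joint densities, whereas you appeal directly to the $\sigma(\rvx_s(v))$-measurability of $y_s(v)$; your closing remarks on joint (not pairwise) independence and label noise already cover the one subtlety that matters.
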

\begin{proof}
It suffices to prove the contrapositive. Suppose the node features $\{\rvx_s(v)\}_{v \in \gV}$ and $\{\rvx_t(v)\}_{v' \in \gV'}$ are both independently sampled. Then, by Bayes' Theorem, 
\begin{align*}
    &\quad P_s \big( \yv ~\big\vert~ \xvs, \xvsp \big) \nonumber \\
    &= \frac{p_s \big( \xvsp ~\big\vert~ \xvs, \yv \big) p_s \left( \xvs, \yv \right)}{p_s \big( \xvs, \xvsp \big)} \\
    &= \frac{p_s \big( \xvsp ~\big\vert~ \xvs, \yv \big) }{p_s \big( \xvs, \xvsp \big)} \cdot p_s \big( \xvs \big) P_s \big( \yv ~\big\vert~ \xvs \big). \label{eq:bayes}
\end{align*}
When $\{\rvx_s(v)\}_{v \in \gV}$ is independent, 
\begin{align*}
    &\quad \frac{p_s \big( \xvsp ~\big\vert~ \xvs, \yv \big) p_s \big( \xvs \big)}{p_s \big( \xvs, \xvsp \big)} \nonumber \\
    &= \frac{p_s \big( \xvsp \big) p_s \big( \xvs \big)}{p_s \big( \xvs, \xvsp \big)} = 1,
\end{align*}
which reduces to 
\begin{equation*}
    P_s \big( \yv ~\big\vert~ \xvs, \xvsp \big) = P_s \big( \yv ~\big\vert~ \xvs \big).
\end{equation*}
This means that the left-hand side of \eqref{eq:cond_shift_observ} is equal to $P_s ( \yv = y ~\vert~ \xvs = \vx )$.
Similarly, in the target domain, when $\{\rvx_t(v)\}_{v' \in \gV'}$ is independent, the right-hand side of \eqref{eq:cond_shift_observ} is equal to $P_t ( y_t(v') = y ~\vert~ \xvt = \vx )$.
By the covariate shift assumption, $\{\rvx_t(v)\}_{v' \in \gV'} = P_t ( y_t(v') = y ~\vert~ \xvt = \vx )$, which implies that the left-hand side and the right-hand side of \eqref{eq:cond_shift_observ} are equal, contradicting the assumed conditional shift in \eqref{eq:cond_shift_observ}.
\end{proof}

Theorem~\ref{lemma:conditional_shift} establishes that assuming conditional shift inherently leads to feature interdependencies, which must be accounted for in GDA. In the following section, we analyze how these interdependencies influence generalization bounds.

\subsection{Interdependency Leads to Poorer Generalization Bounds}
We now analyze how interdependencies among graph nodes hinder generalization. We assume that node features are sampled via a markov chain. This assumption not only introduces interdependencies but also aligns with the common use of random walk samplers in graph-based learning methods. Notably, previous GDA methods such as \cite{wu2020unsupervised}, incorporate both graph adjacency matrices and a matrix derived from random walks to capture long-range relations. In this context, we assume that the invariant measure of the markov chain is equal to the marginal distribution, and derive generalization bounds as a function of the mixing time \cite[Chapter 4.5]{levin2017markov} of the markov chain. We state the result in the following theorem, of which the proof is presented in Appendix~\ref{app:proof_inequality}.

\begin{theorem}\label{thm:dep_graph_markov}
Suppose the hypothesis class $\gH$ has a VC-dimension of $d$ and comprises $K$-Lipschitz functions.
Let
\begin{equation}\label{eq:def_lambda}
    \epsilon^* = \min_{h \in \gH} \epsilon_s(h) + \epsilon_t(h),
\end{equation}
where $\epsilon_s(h)$ and $\epsilon_t(h)$ are defined as in \eqref{eq:def_erh}.
Let $N_s$ data points be sampled following a markov chain with invariant measure $\mu_s$ and a mixing time of $t_{\textup{mix}}$. Then with a probability of at least $1-\delta$,
\begin{equation}\label{eq:bd_mixing}
\begin{aligned}
    \epsilon_t(h) &\leq \hat{\epsilon}_s(h) + \sqrt{\frac{8d \log(e N_s / d)}{N_s}} + \sqrt{\frac{18 \log(2/\delta) t_{\textup{mix}}}{N_s^2}} + \\
    &\qquad 2K W_1 (\mu_s, \mu_t) + \epsilon^*.
\end{aligned}
\end{equation}
\end{theorem}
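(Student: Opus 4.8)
The plan is to separate the target-error bound into a \emph{domain-bridging} step, which transports the error from $\mu_s$ to $\mu_t$, and a \emph{uniform-convergence} step, which passes from the empirical to the true source error under Markov sampling. For the first step I would prove the purely distributional inequality $\epsilon_t(h) \leq \epsilon_s(h) + 2K W_1(\mu_s,\mu_t) + \epsilon^*$. Let $h^* \in \gH$ attain the minimum in \eqref{eq:def_lambda}, so that $\epsilon_s(h^*)+\epsilon_t(h^*)=\epsilon^*$. Since $d_\gY$ is a metric, the triangle inequality gives $\epsilon_t(h) \leq \epsilon_t(h^*) + \E_{\mu_t}[d_\gY(h,h^*)]$. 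The disagreement map $\vx \mapsto d_\gY(h(\vx),h^*(\vx))$ is $2K$-Lipschitz, because $h$ and $h^*$ are each $K$-Lipschitz and $d_\gY$ satisfies the triangle inequality; hence Kantorovich--Rubinstein duality yields $\E_{\mu_t}[d_\gY(h,h^*)] \leq \E_{\mu_s}[d_\gY(h,h^*)] + 2K W_1(\mu_s,\mu_t)$. A further triangle inequality bounds $\E_{\mu_s}[d_\gY(h,h^*)] \leq \epsilon_s(h)+\epsilon_s(h^*)$. Chaining these and folding $\epsilon_s(h^*)+\epsilon_t(h^*)$ into $\epsilon^*$ gives the claim. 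Routing the argument through the disagreement with $h^*$, rather than directly through $\epsilon_s,\epsilon_t$, is precisely what avoids assuming $h_0$ is Lipschitz.

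Next I would show that, with probability at least $1-\delta$, the inequality $\epsilon_s(h) \leq \hat{\epsilon}_s(h) + \sqrt{8d\log(eN_s/d)/N_s} + \sqrt{18\log(2/\delta)\,t_{\textup{mix}}/N_s^2}$ holds uniformly over $\gH$. Writing $\Phi := \sup_{h\in\gH}\bigl(\epsilon_s(h)-\hat{\epsilon}_s(h)\bigr)$, I split $\Phi \leq \E[\Phi] + (\Phi-\E[\Phi])$. Because the invariant measure of the chain is $\mu_s$, every sampled node has marginal $\mu_s$ and $\E[\hat{\epsilon}_s(h)]=\epsilon_s(h)$; a growth-function / Sauer argument then bounds the \emph{expected} uniform deviation $\E[\Phi]$ by the VC term $\sqrt{8d\log(eN_s/d)/N_s}$. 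For the fluctuation $\Phi-\E[\Phi]$, note that changing a single sampled node perturbs the normalized loss $\hat{\epsilon}_s$ by at most $1/N_s$, so $\Phi$ has bounded differences; inserting these into a McDiarmid-type concentration inequality \emph{for Markov chains}, built on the mixing-time machinery of \cite[Chapter 4.5]{levin2017markov}, produces the confidence term $\sqrt{18\log(2/\delta)\,t_{\textup{mix}}/N_s^2}$. Combining this with the distributional inequality of the first step completes the bound.

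The main obstacle is the fluctuation step. The classical bounded-differences inequality is invalid for dependent samples, so the i.i.d. generalization argument cannot be reused verbatim; the crux is to substitute a mixing-time-based concentration inequality, obtained via a Marton-coupling or blocking construction, and to bookkeep precisely how $t_{\textup{mix}}$ enters the exponent so that the stated term of order $\sqrt{t_{\textup{mix}}/N_s^2}$ emerges. A secondary subtlety is the expectation bound on $\E[\Phi]$: the standard symmetrization that yields VC rates relies on independence, so one must justify it for the stationary chain, for instance through a ghost-sample or block decomposition, in order to recover the VC term without an additional mixing-time penalty.
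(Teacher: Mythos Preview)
Your proposal is correct and follows essentially the same route as the paper: the paper also splits into the Wasserstein domain-bridging inequality $\epsilon_t(h)\le\epsilon_s(h)+2KW_1(\mu_s,\mu_t)+\epsilon^*$ (which it cites from prior work rather than re-deriving as you do) and the uniform-convergence step, bounding $\E[\Phi]$ by the VC/Rademacher term and then controlling $\Phi-\E[\Phi]$ via the Markov-chain McDiarmid inequality of Paulin (cited explicitly as \cite[Corollary~2.10]{paulin2015concentration}) with bounded differences of order $1/N_s$. You are in fact more careful than the paper about the symmetrization step for $\E[\Phi]$ under dependent sampling; the paper simply invokes the standard Rademacher-to-VC bound without addressing this.
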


The mixing time $t_\textup{mix}$ serves as a measure of how rapidly the markov chain converges to its invariant measure. We remark that if the source data are independent, then $t_\textup{mix} = 0$. More generally, the closer the chain is to an independent process, the smaller the mixing time, leading to a tighter generalization bound with a reduced right-hand side in \eqref{eq:bd_mixing}. This bound suggests that minimizing interdependencies among node representations is crucial for improved generalization. 

Another way of modeling interdependencies of graph data is through dependency graphs \cite{janson2004large}. The corresponding analysis is presented in Appendix~\ref{app:generalization}, which yields a conclusion analogous to the above theorem.

We emphasize that the dependencies need only be alleviated at the representation level rather than at the raw input level. This is consistent with traditional domain adaptation methods, where distributions are aligned only at the representation level \cite{ganin2015unsupervised}. In the following section, we argue that the widely used message passing scheme of graph neural networks can produce interdependent representation in this sense.

\subsection{Message Passing Can Lead to Higher Interdependency}
For a simple analysis, consider the following message-passing scheme similar to GCN: $\mH^{(k+1)} = \sigma ( \tilde{\mA} \mH^{(k)} \mW )$, where $\mH^{(k)}$ is the matrix of node feature in the $k$-th layer, $\tilde{\mA}$ is the adjacency matrix augmented with self loops: $\tilde{\mA} = \mA + \mI$, $\mW$ is the learnable weight matrix and $\sigma$ is the activation function. The next theorem shows that, even if the raw node features are i.i.d., a very simple GNN could produce correlated representations. The proof is presented in Appendix~\ref{app:proof_energy}.
\begin{theorem}\label{thm:high_corr}
Suppose $\mW = \mI$ and $\sigma = \operatorname{Id}$ in the above propagation. Given input data $\mX\in \R^{N\times D}$ whose rows are i.i.d. sampled from $\gN(0, \mI)$, the feature correlation measured by $\E \left[ \norm{\mH^{(k)} \mH^{(k) \T}}_{\operatorname{F}}^2 \right]$ is expressed as
\begin{equation}\label{eq:exp_HHT}
    D \sum_{i,j=1}^N (D+1)(\tilde{\mA}^{2k})_{ij}^2 +  (\tilde{\mA}^{2k})_{ii}(\tilde{\mA}^{2k})_{jj},
\end{equation}
which increases monotonically with $k$.
\end{theorem}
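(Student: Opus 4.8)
The plan is to split the statement into two independent tasks: evaluating the expectation in closed form, and then reading off monotonicity from the resulting expression. First I would unroll the recursion. Since $\mW = \mI$ and $\sigma = \operatorname{Id}$, the update $\mH^{(k+1)} = \sigma(\tilde{\mA}\mH^{(k)}\mW)$ collapses to $\mH^{(k+1)} = \tilde{\mA}\mH^{(k)}$, so with $\mH^{(0)} = \mX$ we obtain $\mH^{(k)} = \tilde{\mA}^k \mX$. Because $\tilde{\mA} = \mA + \mI$ is symmetric, $\tilde{\mA}^k$ is symmetric and $\mH^{(k)}\mH^{(k)\T} = \tilde{\mA}^k \mX \mX^\T \tilde{\mA}^k$. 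Writing $\mB = \tilde{\mA}^k$, I would record the scalar entry
\begin{equation*}
    (\mH^{(k)}\mH^{(k)\T})_{ij} = \sum_{m,n=1}^N \sum_{l=1}^D \mB_{im}\mB_{jn} X_{ml} X_{nl},
\end{equation*}
which isolates the only source of randomness, the i.i.d.\ entries $X_{ml} \sim \gN(0,1)$.

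Second, I would compute $\E[(\mH^{(k)}\mH^{(k)\T})_{ij}^2]$. Squaring produces a product of four Gaussian entries, so I would apply the Gaussian fourth-moment (Isserlis/Wick) identity: the expectation of a product of four jointly Gaussian mean-zero variables equals the sum over the three ways of pairing them, each pairing contributing covariances, i.e.\ Kronecker deltas matching both the node index and the feature index. The ``diagonal-in-$l$'' structure is exactly what produces the powers of $D$: one pairing leaves two free sums over the feature index and hence a factor $D^2$, while the other two leave a single free feature sum and a factor $D$. After substituting the deltas, every surviving sum over $\mB$-entries collapses to an entry of $\mB\mB^\T = \mB^2 = \tilde{\mA}^{2k}$, giving
\begin{equation*}
    \E\!\left[(\mH^{(k)}\mH^{(k)\T})_{ij}^2\right] = D(D+1)(\tilde{\mA}^{2k})_{ij}^2 + D(\tilde{\mA}^{2k})_{ii}(\tilde{\mA}^{2k})_{jj},
\end{equation*}
and summing over $i,j$ yields \eqref{eq:exp_HHT}. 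I expect the main obstacle here to be purely organizational, namely tracking which of the three pairings yields $D^2$ versus $D$ and verifying that no cross term is double-counted.

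Finally, for monotonicity I would deliberately avoid a spectral argument: the eigenvalues $\lambda_i$ of $\tilde{\mA}$ need not all exceed $1$ in magnitude, so $\Tr(\tilde{\mA}^{2k}) = \sum_i \lambda_i^{2k}$ is not obviously termwise monotone. Instead I would use an entrywise walk-counting observation. Since $\tilde{\mA}$ has nonnegative entries and unit diagonal, for all $i,j$,
\begin{equation*}
    (\tilde{\mA}^{m})_{ij} = \sum_{l=1}^N (\tilde{\mA}^{m-1})_{il}\tilde{\mA}_{lj} \geq (\tilde{\mA}^{m-1})_{ij}\tilde{\mA}_{jj} = (\tilde{\mA}^{m-1})_{ij},
\end{equation*}
so each entry $(\tilde{\mA}^m)_{ij}$ is nonnegative and non-decreasing in $m$. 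Applying this with $m = 2k$, both $(\tilde{\mA}^{2k})_{ij}^2$ and the product $(\tilde{\mA}^{2k})_{ii}(\tilde{\mA}^{2k})_{jj}$ are non-decreasing in $k$, being powers and products of nonnegative non-decreasing quantities. As \eqref{eq:exp_HHT} is a sum of these with the strictly positive weights $D(D+1)$ and $D$, it is monotone non-decreasing in $k$ (and strictly increasing once $\gG$ contains an edge, where the Perron root of $\tilde{\mA}$ exceeds $1$), completing the argument.
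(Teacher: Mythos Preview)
Your proposal is correct and arrives at the same formula and the same monotonicity conclusion as the paper, but the execution differs in one noteworthy respect. For the expectation, the paper sets $\mB = \tilde{\mA}^{2k}$, rewrites $\norm{\mH^{(k)}\mH^{(k)\T}}_{\operatorname{F}}^2 = \tr(\mX^\T \mB \mX)^2$, expands into a six-fold sum $\sum_{i,j,p,q,l,m} X_{il}B_{ij}X_{jm}X_{pm}B_{pq}X_{ql}$, and then does explicit casework on which index coincidences make the fourth moment nonzero (five cases, using $\E[\rx^2]=1$ and $\E[\rx^4]=3$). You instead set $\mB = \tilde{\mA}^k$, work entrywise with $(\mH^{(k)}\mH^{(k)\T})_{ij}$, square, and apply Isserlis/Wick directly. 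Your route is tidier: the three Wick pairings replace the paper's five cases and their subsequent algebraic recombination, and the appearance of $\tilde{\mA}^{2k}$ falls out cleanly from $\mB\mB^\T = \mB^2$. The price is that you should state explicitly that Isserlis remains valid when some of the four Gaussian factors coincide (which it does, and which is exactly what recovers the $\E[\rx^4]=3$ contribution the paper invokes by hand).

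For monotonicity the two arguments are the same observation in different clothing: the paper phrases it combinatorially (a length-$2k$ walk extends to length $2(k{+}1)$ by appending two self-loops), while you give the one-line algebraic version $(\tilde{\mA}^m)_{ij} \geq (\tilde{\mA}^{m-1})_{ij}\tilde{\mA}_{jj} = (\tilde{\mA}^{m-1})_{ij}$. Your remark that a spectral argument would not work directly here is a useful addition the paper does not make.
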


In addition to the theorem where the assumption might be too simplified, we consider the case where the weights follow a Glorot initialization. We plot the feature correlation in Figure~\ref{fig:corr_plot}. The result shows a clear increase of feature interdependencies as GCN depth increases. These findings suggest that message passing GNNs such as GCN may not be well-suited for GDA tasks. 
\begin{figure}[t]
    \centering
    \includegraphics[width=0.4\textwidth]{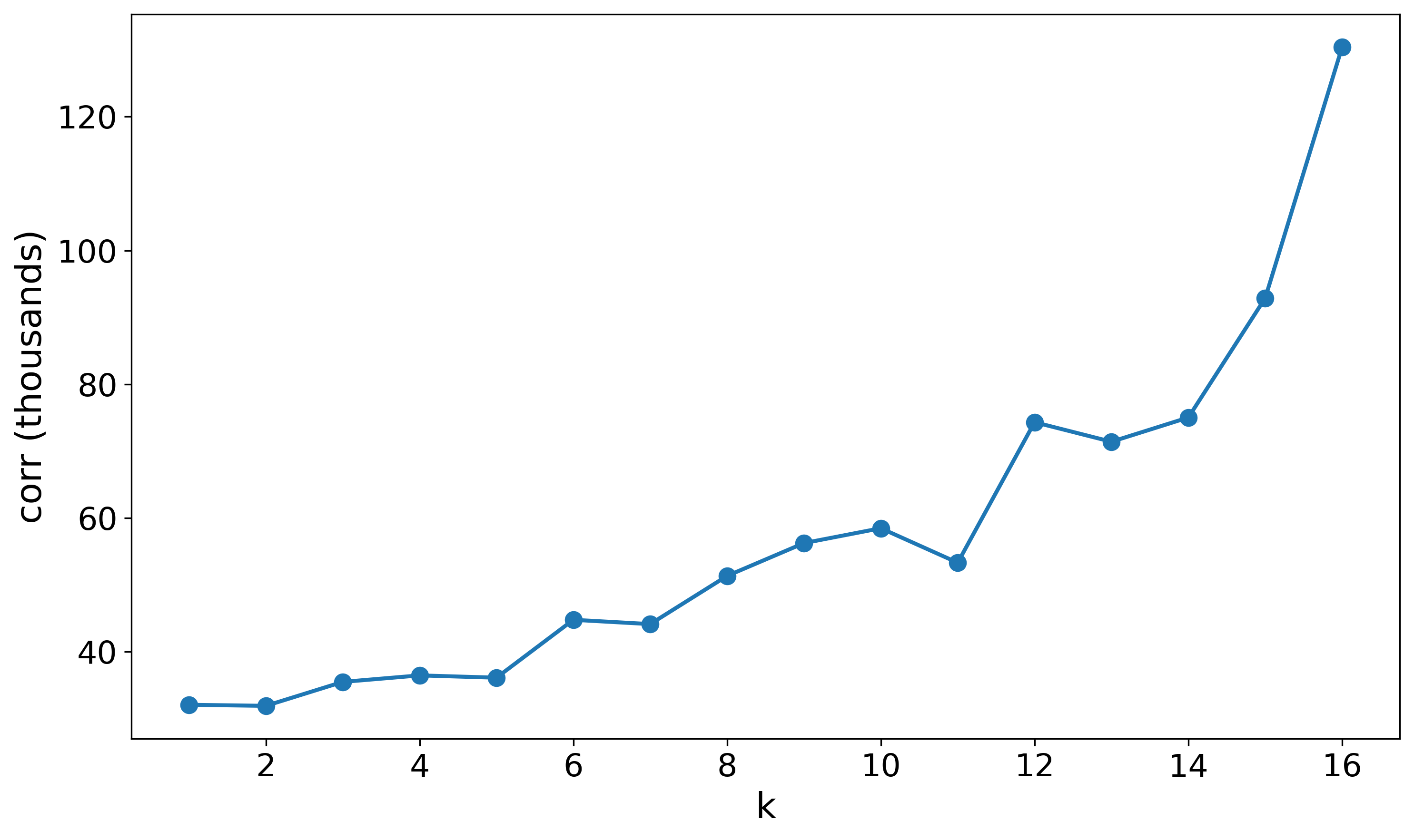}
    \caption{How the correlation $\E [ \|{\mH^{(k)} \mH^{(k) \T}}\|_{\operatorname{F}}^2 ]$ changes with the number of layers $k$.}
    \label{fig:corr_plot}
\end{figure}

We remark that node feature correlation is also related to oversmoothing, a well-known phenomenon of message passing GNN which is quantified using the Dirichlet Energy~\cite{cai2020note} and the Mean Average Distance (MAD)~\cite{chen2020measuring}. However, these metrics do not capture the specific effects of feature correlation as shown in our analysis.

\section{Improving GDA with Decorrelation}
Based on the above analysis, we focus on reducing feature correlation in GCN representations for GDA. We adopt the classical UDAGCN model~\cite{wu2020unsupervised} as a representative framework and replace its GCN layers with feature decorrelation operations. This modification can be naturally extended to other backbone architectures, as discussed in Section~\ref{subsec:decorrelation_generalizability}.

\paragraph{Decorrelated Feature Extraction}
As claimed in \cite{zhao2020pairnorm, chen2021graph, ma2021unified}, GNN layers can be regarded as gradient steps of graph signal denoising. To decorrelate the node representations, for both domains $r=s,t$, we consider the graph signal denoising  objective function regularized by a decorrelation term. Namely, for a given graph signal $\mX_r$, one seeks to find $\mH_r$ that minimizes
\begin{equation}\label{eq:decorr_opt}
    \frac{1}{2} \norm{\mH_r - \mX_r}_\F^2 + \frac{\lambda_1}{2} \tr (\mH_r^\T \tilde{\mL}_r \mH_r) + \frac{\lambda_2}{4} \norm{\mH_r \mH_r^\T - \mI}_\F^2,
\end{equation}
where $\tilde{\mL}_r = \mI - (\mD_r+\mI)^{-1/2} \tilde{\mA}_r (\mD_r+\mI)^{-1/2}$ is the normalized Laplacian used in GCN, with $\mD_r$ denoting the degree matrix in domain $r$. Here, the $\lambda_1$ term is a universal smoothing term in graph signal denoising \cite{ma2021unified}, whereas the $\lambda_2$ term encourages the rows -- that is, the node features -- of $\mH_r$ to be orthogonal. We remark that \citet{liu2023enhancing} also introduced a similar $\lambda_2$ term, but with $\mH^\T \mH$ instead of $\mH \mH^\T$ since their objective is to minimize the dimension-wise correlation instead of feature-wise correlation.  The gradient of \eqref{eq:decorr_opt} is
\begin{equation}
    G(\mH_r) = \mH_r - \mX_r + \lambda_1 \tilde{\mL}_r \mH_r + \lambda_2 (\mH_r \mH_r^\T - \mI) \mH_r.
\end{equation}
Consider $\mH_r^{(0)} = \mX_r$ to be the input node features. Let $\mH_r^{(l)}$ denote the features in the $l$-th layer of a GNN. The $(l+1)$-th layer then follows the gradient step accordingly:
\begin{equation}\label{eq:grad_des_simple}
    \mH_r^{(l+1)} = \mH_r^{(l)} - \gamma G(\mH_r^{(l)}).
\end{equation}

To further reduce correlation, we also apply the graph transformer layers by \citet{dwivedi2020generalization} which employ sparse attention mechanism and positional encoding. Specifically, the sparse attention block operates as: 
\begin{align}
    \operatorname{SA}(\mQ,\mK,\mV,\mA_r)=\operatorname{softmax} \left( \frac{(\mQ\mK^\T)\circ \mA_r}{\sqrt{\Tilde{D}}} \right) \mV,
\end{align}
where $\tilde{D}$ is the latent feature dimension; $\mQ, \mK, \mV \in \R^{N \times \tilde{D}}$ are the query, key and value matrices, shared in both source and target domains, and $\mA_r$ is the adjacency matrix in domain $r$, $r=s,t$. On the other hand, the positional encoding utilizes the two smallest non-trivial Laplacian eigenvectors of a node, denoted by $\vphi_r$, $r=s,t$. Given inputs $\mH_r \in \R^{N_r \times \tilde{D}}$ and $\vphi_r \in \R^{N_r \times k}$, the features with positional encoding are produced firstly:
\begin{equation}
    \mH_r^\dagger = \mH_r + \operatorname{Denselayer}(\vphi_r). \label{eq:hrhrpr}
\end{equation}
Then, each layer applies the following operations:
\begin{align}
    \mH_r^\ddagger &= \operatorname{BatchNorm}(\mH_r^\dagger+\operatorname{SA}(\mA_r,\mH_r^\dagger)),\\
    \mH_r^\wr &= \operatorname{ReLU}(\operatorname{Denselayer}(\mH_r^\ddagger)),\\
    \mZ_r &= \operatorname{BatchNorm}(\mH_r^\ddagger+\operatorname{Denselayer}(\mH_r^\wr)). \label{eq:zrhrhr}
\end{align}

For brevity, we name our model as DFT, short for \textbf{D}ecorrelated \textbf{F}eature Extraction and Graph \textbf{T}ransformer layers. The details of DFT architecture are presented in Appendix~\ref{app:network}. The time complexity of DFT is dominated by the complexity of the graph transformer layers, which is further dominated by matrix multiplication of size $N \times \tilde{D}$ and $\tilde{D} \times k$. Therefore, the time complexity is $O(N k \tilde{D})$. 

The overall objective of our model is as follows:
\begin{equation}
    \min_{\vtheta_\textup{feat},\vtheta_\textup{clf}} \max_{\vtheta_\textup{critic}}
    \left\{\gL_s+
    \lambda_t\gL_t+
    \lambda_\textup{critic} \left( \gL_\textup{critic}-
    \lambda_{\textup{gp}}\gL_{\textup{gp}} \right) \right\},
\end{equation}
where $\vtheta_\textup{feat}$, $\vtheta_\textup{clf}$ and $\vtheta_\textup{critic}$ summarize the trainable parameters in the feature extractor, the classifier and the domain critic, respectively. 
$\lambda_t$, $\lambda_\textup{critic}$ and $\lambda_{\textup{gp}}$ are hyperparameters.
$\gL_s$, $\gL_t$ represent the losses of classifiers in source and target domains, $\gL_\textup{critic}$ represents the loss of the critic and $\gL_{\textup{gp}}$ represents the gradient penalty term. The details of the loss terms are presented in Appendix~\ref{app:loss}. The hyperparameters are chosen to be $\displaystyle \lambda_t = \frac{\textup{current epoch}}{\textup{epochs} \times 100}$, $\lambda_\textup{critic} = 1$ and $\lambda_\textup{gp} = 10$. 
For clarity, we present the algorithm for training DFT as Algorithm~\ref{algorithm}. 
\begin{algorithm}[h]
    \caption{DFT training algorithm for unsupervised GDA in node classification}
    \label{algorithm}
    \begin{algorithmic}[1]
    \Require
    source graph $\gG_s$ with adjacency matrix $\mA_s$, node features $\mX_s$, node class labels $\mY_s$; target graph $\gG_t$ with adjacency matrix $\mA_t$, node features $\mX_t$; domain critic training step $n_\textup{critic}$; number of layers $L$; coefficients $\lambda_t$, $\lambda_\textup{critic}$, $\lambda_\textup{gp}$; learning rate $\eta$; initialized neural network parameters $\vtheta_\textup{feat}$, $\vtheta_\textup{clf}$, $\vtheta_\textup{critic}$.
    \Ensure
    updated neural network parameters $\vtheta_\textup{feat}$, $\vtheta_\textup{clf}$, $\vtheta_\textup{critic}$.
    \While{$\vtheta_\textup{feat}$, $\vtheta_\textup{clf}$, $\vtheta_\textup{critic}$ not converged and maximum iteration number not reached}
    \For{$r = s, t$}
    \For{$i = 1, \cdots, L$}
        \State $\mH_r^{(i)} \gets$ update using~\eqref{eq:grad_des_simple}
    \EndFor
    \State $\mH_r \gets \mH_r^{(L)}$ 
    \State $\mZ_r \gets$  update using~\eqref{eq:hrhrpr}--\eqref{eq:zrhrhr}
    \EndFor
    \For{$j=1, \cdots, n_\textup{critic}$}
    \State \hspace{-5pt} $\vtheta_\textup{critic} \gets \operatorname{Adam}\left(-\eta, \nabla_{\vtheta_\textup{critic}} (\gL_\textup{critic}-\lambda_\textup{gp}\gL_\textup{gp}) \right)$
    \EndFor
    \State $\vtheta_\textup{feat} \gets \operatorname{Adam}\left(\eta, \nabla_{\vtheta_\textup{feat}} (\gL_s+\lambda_t\gL_t+\lambda_\textup{critic}\gL_\textup{critic}) \right)$
    \State $\vtheta_\textup{clf} \gets \operatorname{Adam} \left( \eta, \nabla_{\vtheta_\textup{clf}} (\gL_s+\lambda_t\gL_t+\lambda_\textup{critic}\gL_\textup{critic}) \right)$
    \EndWhile
    \end{algorithmic}
\end{algorithm}

\section{Experiments}\label{sec:experiments}
\subsection{Datasets}\label{subsec:Datasets}
\begin{table}[b]
    \centering
    \caption{Statistics of datasets}
    \scalebox{0.9}{
    \label{tab:Table 2}
    \begin{tabular}{c|cccc}
        \toprule
        Dataset & \#Nodes & \#Edges & \#Attributes & \multicolumn{1}{c}{\#Labels} \\
        \midrule
        DBLPv7 & 5,484 & 8,130 & 6,775 & 5 \\
        Citationv1 & 8,935 & 15,113 & 6,775 & 5 \\
        ACMv9 & 9,360 & 15,602 & 6,775 & 5 \\
        Blog1 & 2,300 & 33,471 & 8,189 & 6 \\
        Blog2 & 2,896 & 53,836 & 8,189 & 6 \\
        Pubmed1 & 9858 & 21,818 & 500 & 3 \\
        Pubmed2 & 9859 & 22,538 & 500 & 3 \\
        \bottomrule
    \end{tabular}
    }
\end{table}

We consider the following cross-network datasets in our experiments. The statistics of the datasets are shown in Table~\ref{tab:Table 2}. An additional dataset is considered in Appendix~\ref{app:largedataset}.

\emph{Citation}. 
The Citation dataset contains three domains, namely DBLP, Citation, and ACM, constructed by \citet{shen2020network} based on datasets in \cite{tang2008arnetminer}. To be consistent with \cite{dai2022graph}, we use the versions DBLPv7, Citationv1, and ACMv9. Spanning the temporal domain, these data encompass papers published from 2004 to 2010, modeled as nodes in directed graphs, where each edge represents a citation between two papers. Each node is labeled into one of five categories based on its topic. The node attributes are bag-of-words vectors representing keywords extracted from their titles. It is therefore a scenario where covarite shift assumption naturally holds when we consider domain adaptation among these datasets.

\emph{Blog}. 
The Blog dataset contains two domains, namely Blog1 and Blog2. Blog1 and Blog2 are two distinct social networks derived from the BlogCatalog dataset \cite{li2015unsupervised}, where each node symbolizes a blogger, and each edge signifies the friendship between two bloggers. The attributes of each node consist of keywords extracted from the blogger's self-description. Additionally, each node is assigned a label that identifies the group it belongs to. We use the version processed by \citet{shen2020adversarial}.

\emph{Pubmed}. 
To show the performance on larger datasets, we have randomly divided the Pubmed dataset~\cite{sen2008collective} into two smaller datasets as source and target graphs.

We consider unsupervised node classification using eight transfer learning tasks, namely D $\rightarrow$ C, C $\rightarrow$ D, A $\rightarrow$ C, C $\rightarrow$ A, D $\rightarrow$ A, A $\rightarrow$ D, Blog1 $\rightarrow$ Blog2, Blog2 $\rightarrow$ Blog1, Pubmed1 $\rightarrow$ Pubmed2, Pubmed2 $\rightarrow$ Pubmed1, where D, C, A refer to the datasets DBLPv7, Citationv1 and ACMv9, respectively.
All labeled data are available in the source domains, and all instances but not their labels are available in the target domains.

\subsection{Implementation Details}\label{Implementation Details}
We implement our method using PyTorch on a Linux server with RTX 3090 GPUs (24GB) and 14 vCPU Intel(R) Xeon(R) Gold 6330 CPU. Each implementation runs on a single GPU. Our architecture contains three layers of decorrelated GCN and four layers of graph transformers. We use Adam \cite{kingma2014adam} for optimization and the learning rate is set to 0.003. The dropout rate is chosen from \{0.1, 0.2, 0.5\}. For the hyperparameters in decorrelated feature extraction, we choose $\lambda_1$ from \{40, 60, 80, 100, 120\}, $\lambda_2$ from \{0.0001, 0.0003, 0.0005, 0.0008, 0.001\}, and $\gamma$ from \{0.001, 0.003, 0.005, 0.008, 0.01\}. In most cases, setting $\lambda_1$, $\lambda_2$ and $\gamma$ to 100, 0.001, and 0.01, respectively, yields optimal results. Each implementation runs 500 epochs. 

\subsection{Baseline Methods}\label{app:Baselines}
We compare DFT with the following GDA approaches. We utilize respective official codes of the baseline methods for their implementations. 
\begin{itemize}[listparindent=0pt, leftmargin=*]
\item\textbf{DANN}~\cite{ganin2016domain}
employs an architecture comprising a feature extractor, taken to be GCN in our implementation, and a gradient reversal layer. 
\item\textbf{AdaGCN}~\cite{dai2022graph}
employs a GCN-based feature extractor and enhances its domain adaptation capabilities by adversarial training.
\item\textbf{UDAGCN}~\cite{wu2020unsupervised}
develops upon the DANN framework by aggregating information extracted by both the adjacency matrix and the PPMI matrix using an attention mechanism. 
\item\textbf{StruRW}~\cite{liu2023structural}
computes edge probabilities between different classes on the target graph, which guide the selection of neighbors for GNN computations on the source graph.
\item\textbf{SpecReg}~\cite{you2023graph} employs spectral properties including spectral smoothness and maximum frequency response to design regularization terms. This regularization scheme significantly bolsters GNN transferability.
\item\textbf{GRADE}~\cite{wu2023noniid}
introduces the concept of graph subtree discrepancy. By quantifying the shift in graph distributions between source and target networks, GRADE optimizes to minimize this discrepancy, thereby enhancing cross-network transfer learning.
\item \textbf{A2GNN}~\cite{liu2024rethinking} investigates the underlying generalization capability of GNN and uses an asymmetric architecture for source and target graphs.
\item \textbf{PairAlign}~\cite{liu2024pairwise} designs specific edge weights and label weights to align the shifts in conditional feature distribution and label distribution.
\end{itemize}

\subsection{Experimental Results}\label{Cross-Domain CLassification Results}
We report the mean of F1-scores (both micro and macro) for the target graph from three independent implementations, and record the standard deviation as well. Table~\ref{tab:Table 3} showcases the F1-scores. We also report the runtime for each epoch in Appendix~\ref{app:runtime}.

Our method consistently demonstrates superior performance, frequently achieving the highest F1 scores and occasionally the second-best. In several cases, the performance margin is substantial. Such consistent strong performance highlights the effectiveness of DFT in addressing the challenges of GDA tasks. We also notice that the non-graph method of DANN occasionally outperforms specialized methods designed for conditional shift. This indicates that the underlying assumptions of conditional shift may not apply to all graph datasets in a general GDA scenario. Finally, DFT consistently and significantly outperforms the vanilla UDAGCN, which highlights the effectiveness of the decorrelation layers.

\begin{table*}[t]
    \centering
    \caption{F1-score results (\%) for comparison with baseline methods (best in \textbf{bold}; second-best \underline{underlined})}
    \label{tab:Table 3}
    \scalebox{0.83}{
    \begin{tabular}{ccccccccccc}
        \toprule
         &F1-score &DANN & AdaGCN & UDAGCN & StruRW & SpecReg & GRADE &A2GNN &PairAlign& DFT (ours)\\
        \midrule
        \multirow{2}{*}{D $\rightarrow$ C}& micro &$50.76\pm 0.03$ & $65.44\pm 2.03$ & $52.90\pm 0.69$ &$58.92\pm 1.30$ & $64.77\pm 0.20$ & $63.32\pm 2.44$ &$\underline{78.36}\pm 0.64$ &$62.20\pm 3.12$& $\boldsymbol{78.45}\pm 0.19$\\
        & macro &$45.35\pm 0.07$ & $61.70\pm 1.68$ & $40.25\pm 0.59$ &$54.09\pm 0.89$ & $60.46\pm 0.30$ & $54.18\pm 4.64$  &$\underline{75.44}\pm 0.70$ &$53.70\pm 1.29$& $\boldsymbol{75.54}\pm 0.91$\\
        \midrule
        \multirow{2}{*}{C $\rightarrow$ D}& micro &$63.93\pm 0.00$ & $68.35\pm 5.58$ & $69.28\pm 0.80$ &$61.06\pm 1.39$ & $67.84\pm 1.99$ & $69.13\pm 0.76$  &$\underline{73.57}\pm 0.64$& $66.24\pm 0.82$&$\boldsymbol{75.77}\pm 0.30$ \\
        & macro &$52.36\pm 0.00$ & $62.51\pm 10.78$ & $64.04\pm 0.18$ &$56.16\pm 1.92$ & $63.93\pm 0.21$ & $66.02\pm 0.74$  &$\underline{71.12}\pm 1.36$ &$57.00\pm 3.40$&$\boldsymbol{73.05}\pm 0.09$ \\
        \midrule
        \multirow{2}{*}{A $\rightarrow$ C} & micro &$52.13\pm 0.00$ & $60.60\pm 7.02$ & $55.03\pm 0.13$ &$58.92\pm 1.30$ & $65.87\pm 0.08$ & $69.12\pm 0.29$ &$\underline{77.49}\pm 0.62$ &$64.09\pm 1.83$&$\boldsymbol{77.69}\pm 0.14$\\
        & macro &$39.52\pm 0.00$ & $53.15\pm 10.46$ & $42.93\pm 0.11$ &$54.34\pm 1.86$ & $62.32\pm 0.11$ & $65.09\pm 3.25$  &$\underline{74.98}\pm 0.87$ &$57.46\pm 1.51$&$\boldsymbol{75.03}\pm 0.39$ \\
        \midrule
        \multirow{2}{*}{C $\rightarrow$ A}& micro &$55.47\pm 0.00$ & $64.79\pm 2.30$ & $62.79\pm 0.08$ &$54.89\pm 0.82$ & $62.77\pm 0.08$ & $63.07\pm 1.76$ &$\boldsymbol{74.87}\pm 0.30$ &$59.16\pm 0.76$&$\underline{69.31}\pm 0.29$\\
        & macro &$46.77\pm 0.00$ & $63.93\pm 2.36$ & $52.82\pm 0.08$ &$51.72\pm 2.62$ & $62.54\pm 0.14$ & $62.98\pm 1.89$  &$\boldsymbol{76.25}\pm 0.30$&$54.88\pm 2.66$ &$\underline{69.21}\pm 0.09$ \\
        \midrule
        \multirow{2}{*}{D $\rightarrow$ A} & micro &$49.08\pm 0.00$ & $59.43\pm 2.05$ & $52.51\pm 0.14$ & $51.64\pm 1.21$& $54.41\pm 0.31$ & $58.18\pm 1.26$  &$\boldsymbol{71.26}\pm 0.08$&$55.02\pm 0.64$ & $\underline{70.65}\pm 0.19$\\
        & macro &$41.47\pm 0.02$ & $59.09\pm 1.71$ & $43.59\pm 0.19$ &$47.38\pm 1.59$ & $52.26\pm 4.76$ & $53.51\pm 3.91$ & $\boldsymbol{72.90}\pm 0.03$&$46.63\pm 0.47$ &$\underline{71.07}\pm 0.31$ \\
        \midrule
        \multirow{2}{*}{A $\rightarrow$ D} & micro &$56.12\pm 0.00$ & $63.86\pm 5.42$ & $61.22\pm 1.77$ &$57.02\pm 1.67$ & $65.16\pm 0.07$ & $\underline{67.16}\pm 0.55$ &$66.71\pm 0.56$&$63.43\pm 1.63$& $\boldsymbol{74.84}\pm 0.38$\\
        & macro &$43.30\pm 0.00$ & $59.27\pm 7.58$ & $54.14\pm 4.75$ &$51.07\pm 2.09$ & $61.22\pm 0.12$ & $61.57\pm 3.18$  &$\underline{63.08}\pm 1.11$&$52.66\pm 1.51$& $\boldsymbol{72.85}\pm 0.89$ \\
        \midrule
        \midrule
        \multirow{2}{*}{Blog1 $\rightarrow$ Blog2} & micro &$24.94\pm 0.24$ &$\underline{34.55}\pm 5.93$ &$25.06\pm 0.05$  &$18.89\pm 3.18$ &$20.03\pm 1.26$  &$25.60\pm 4.41$ &$25.02\pm 1.82$&$27.45\pm 2.76$&$\boldsymbol{39.43}\pm 1.39$ \\
        & macro &$17.87\pm 0.25$ &$\underline{31.57}\pm 6.38$ &$18.98\pm 0.05$  &$7.18\pm 4.54$  &$10.12\pm 2.01$  &$14.98\pm 4.14$ &$15.14\pm 4.05$ &$22.98\pm 4.74$&$\boldsymbol{37.12}\pm 1.50$  \\
        \midrule
        \multirow{2}{*}{Blog2 $\rightarrow$ Blog1} & micro &$23.96\pm 0.35$ &$\underline{29.15}\pm 3.69$ &$19.50\pm 0.02$  &$17.54\pm 0.94$ &$20.27\pm 0.76$  &$23.08\pm 2.07$ &$24.93\pm 0.84$&$27.64\pm 2.26$&$\boldsymbol{43.87}\pm 0.65$ \\
        & macro &$17.80\pm 1.74$ &$\underline{27.50}\pm 6.27$ &$9.97\pm 0.00$  &$6.36\pm 2.90$  &$10.60\pm 1.90$  &$13.86\pm 0.50$ &$12.01\pm 0.54$ &$23.52\pm 3.17$&$\boldsymbol{38.86}\pm 2.06$  \\ 
        \midrule
        \midrule
        \multirow{2}{*}{Pubmed1 $\rightarrow$ 2} & micro &$85.31\pm 0.05$ &$\underline{85.53}\pm 0.13$ &$84.52\pm 0.01$  &$69.08\pm 0.70$ &$85.25\pm 0.05$  &$84.95\pm 0.02$ &$81.24\pm 0.25$&$84.41\pm 0.65$&$\boldsymbol{88.73}\pm 0.19$ \\
        & macro &$84.75\pm 0.05$ &$\underline{84.84}\pm 0.15$ &$83.85\pm 0.16$  &$67.53\pm 0.82$  &$84.64\pm 0.02$  &$84.23\pm 0.05$ &$80.01\pm 0.05$ &$84.24\pm 0.56$&$\boldsymbol{88.51}\pm 0.18$  \\ 
        \midrule
        \multirow{2}{*}{Pubmed2 $\rightarrow$ 1} & micro &$86.03\pm 0.01$ &$80.58\pm 0.32$ &$85.23\pm 0.04$  &$69.13\pm 0.72$ &$\underline{86.24}\pm 0.09$  &$85.81\pm 0.08$ &$80.58\pm 0.32$&$83.98\pm 0.39$&$\boldsymbol{89.20}\pm 0.18$ \\
        & macro &$85.77\pm 0.01$ &$79.46\pm 0.53$ &$84.80\pm 0.02$  &$68.17\pm 0.88$  &$\underline{86.90}\pm 0.10$  &$85.37\pm 0.10$ &$79.46\pm 0.53$ &$83.94\pm 0.72$&$\boldsymbol{89.15}\pm 0.10$  \\
        \bottomrule
    \end{tabular}
    }
\end{table*}

\begin{table*}[t]
    \centering
    \caption{F1-score results (\%) for ablation studies (best in \textbf{bold})}
    
    \scalebox{0.9}{
    \label{tab:Table 4}
    \begin{tabular}{cccccccccc}
        \toprule
         &F1-score & DFT-GCN & DFT-noT &DFT-pureT& DFT($\lambda_1=0$) & DFT($\lambda_2=0$) &DFT-mmd&DFT-DropEdge &DFT (ours)\\
        \midrule
        \multirow{2}{*}{D $\rightarrow$ C}& micro &$75.95\pm 0.75$&$65.05\pm 0.20$&$65.88\pm 0.82$&$73.06\pm 0.05$ &$76.51\pm 0.04$ & $76.11\pm 0.25$&$75.47\pm 0.44$&$\boldsymbol{78.45}\pm 0.19$\\
        & macro &$71.78\pm 0.93$&$60.28\pm 0.26$&$63.13\pm 0.64$ &$67.93\pm 0.38$&$71.59\pm 0.33$ &$73.58\pm 0.27 $& $72.30 \pm 0.43$&$\boldsymbol{75.54}\pm 0.91$\\
        \midrule
        \multirow{2}{*}{C $\rightarrow$ D}& micro &$75.28\pm 0.38$& $71.56\pm 0.07$&$74.15\pm 0.64$ &$71.90\pm 0.43$ &$75.20\pm 0.15$ & $71.87\pm 0.30$&$73.51\pm 0.95$&$\boldsymbol{75.77}\pm 0.30$ \\
        & macro &$72.46\pm 0.53$&$69.16\pm 0.14$&$71.62\pm 0.97$ &$67.53\pm 0.42$ &$72.53\pm 0.60$ &$68.44\pm 0.30$& $71.56\pm 1.03$&$\boldsymbol{73.05}\pm 0.09$ \\
        \midrule
        \multirow{2}{*}{A $\rightarrow$ C}& micro &$76.64\pm 0.12$&$67.17\pm 0.05$&$74.83\pm 1.17$ &$66.89\pm 1.13$ &$76.76\pm 0.06$ & $73.10\pm 0.41$&$76.74\pm 0.50$&$\boldsymbol{77.69}\pm 0.14$\\
        & macro &$73.99\pm 0.05$&$63.03\pm 0.32$&$72.67\pm 1.22$ &$63.81\pm 1.25$ &$73.70\pm 0.36$ &$70.46\pm 0.36$& $74.43\pm 0.65$&$\boldsymbol{75.03}\pm 0.39$ \\
        \midrule
        \multirow{2}{*}{C $\rightarrow$ A} & micro &$66.88\pm 0.60$&$58.89\pm 0.72$&$65.73\pm 0.28$ &$55.65\pm 0.92$ &$63.70\pm 0.37$ &$65.08\pm 0.18$&$69.79\pm 0.76$&$\boldsymbol{69.31}\pm 0.29$\\
        & macro &$66.67\pm 0.96$&$58.08\pm 0.93$& $65.69\pm 0.42$&$53.88\pm 1.40$ &$63.80\pm 0.26$ &$64.85\pm 0.47$& $70.09\pm 0.99$&$\boldsymbol{69.21}\pm 0.09$ \\
        \midrule
        \multirow{2}{*}{D $\rightarrow$ A}& micro &$59.64\pm 0.93$&$60.67\pm 0.18$&$61.96\pm 0.93$ &$60.37\pm 0.23$ &$66.26\pm 1.43$ &$60.66\pm 0.15$&$66.37\pm 0.35$ & $\boldsymbol{70.65}\pm 0.19$\\
        & macro &$53.96\pm 1.18$&$58.26\pm 0.44$&$62.20\pm 1.07$ &$56.35\pm 0.75$ &$63.67\pm 4.21$ &$53.48\pm 0.26$& $67.00\pm 0.30$&$\boldsymbol{71.07}\pm 0.31$ \\
        \midrule
        \multirow{2}{*}{A $\rightarrow$ D}& micro & $73.56\pm 1.48$&$70.26\pm 0.46$&$72.50\pm 0.21$ &$63.81\pm 0.48$ &$74.38\pm 0.32$ & $69.06\pm 0.15$&$71.50\pm 0.92$& $\boldsymbol{74.84}\pm 0.38$\\
        & macro &$70.04\pm 2.84$&$66.87\pm 0.49$&$69.85\pm 0.64$ &$61.15\pm 0.61$ &$72.12\pm 0.25$ &$65.61\pm 0.16$&$69.04\pm 0.87$& $\boldsymbol{72.85}\pm 0.89$ \\
        \bottomrule
    \end{tabular}
    }
\end{table*}

\subsection{Ablation Studies}
To further validate the effectiveness of the decorrelation and transformer layers, we perform the following ablation studies. We consider variants of DFT where these components are omitted, yielding the following configurations:
\begin{itemize}[listparindent=0pt, leftmargin=*]
    \item\textbf{DFT-GCN} uses standard GCN layers and graph transformers.
    \item\textbf{DFT-noT} uses decorrelated GCN but no transformer layers.
    \item\textbf{DFT-pureT} replaces graph transformer layers by standard transformer self-attention layers.
    \item\textbf{DFT-mmd} replaces adversarial loss by the maximum mean discrepancy (MMD) loss~\cite{shen2020network}.
    \item\textbf{DFT-DropEdge} replaces decorrelated GCN by DropEdge~\cite{rong2019dropedge}, a method for reducing node connectivity.
\end{itemize}

We also conduct experiments on DFT when $\lambda_1=0$ and $\lambda_2=0$, i.e., the ablation studies of~\eqref{eq:decorr_opt}. The results for the ablation studies are shown in Table~\ref{tab:Table 4}. Clearly, removing any component results in a deterioration of results. We conclude that the effectiveness of DFT is attributed to the combination of components for alleviating local interdependencies. Nevertheless, even partial inclusion of these components improves performance over baseline UDAGCN model. To explain the contributions, we calculate the Shapley values~\cite{lundberg2017unified} of decorrelation and transformer, which are 34.36 and 40.09, respectively (46.19\% and 53.81\% after normalization), which suggests that these two components have similar contributions.

\subsection{Feature Visualization and Analysis}
We visualize the representations learned using our approach and baseline methods by projecting them onto a two-dimensional space using the t-distributed stochastic neighbor embedding (t-SNE) \cite{van2008visualizing}.

\begin{figure*}[t]
  \centering
  \subfigure[DANN]{
    \label{fig:subfig:b}
    \includegraphics[width=0.3\textwidth]{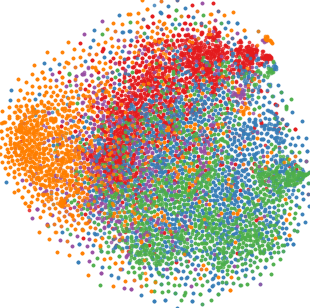}
  }
  \hfill
  \subfigure[UDAGCN]{
    \label{fig:subfig:c}
    \includegraphics[width=0.3\textwidth]{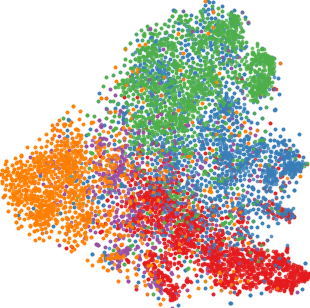}
  }
  \hfill
  \subfigure[AdaGCN]{
    \label{fig:subfig:d}
    \includegraphics[width=0.3\textwidth]{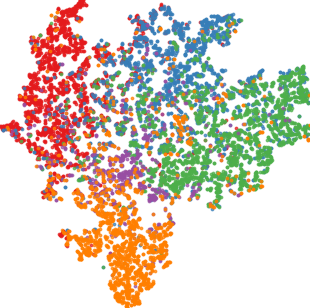}
  }\\
  \subfigure[GRADE]{
    \label{fig:subfig:e}
    \includegraphics[width=0.3\textwidth]{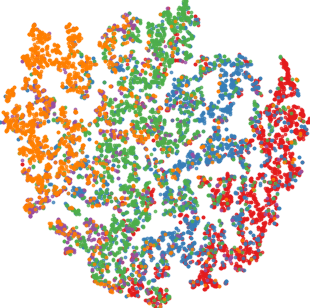}
  }
  \hfill
  \subfigure[StruRW]{
    \label{fig:subfig:f}
    \includegraphics[width=0.3\textwidth]{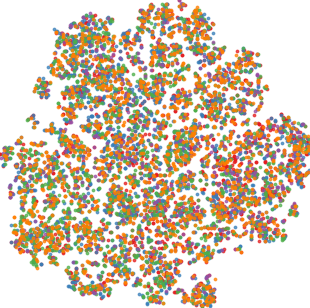}
  }
  \hfill
  \subfigure[SpecReg]{
    \label{fig:subfig:g}
    \includegraphics[width=0.3\textwidth]{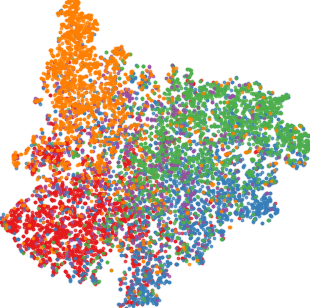}
  }\\
  \subfigure[A2GNN]{
    \label{fig:subfig:h}
    \includegraphics[width=0.3\textwidth]{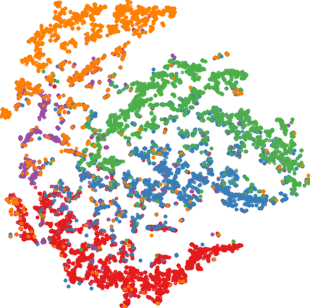}
  }
  \hfill
  \subfigure[PairAlign]{
    \label{fig:subfig:i}
    \includegraphics[width=0.3\textwidth]{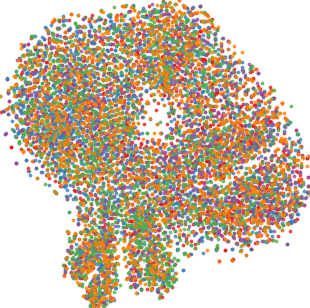}
  }
  \hfill
  \subfigure[DFT]{
    \label{fig:subfig:a}
    \includegraphics[width=0.3\textwidth]{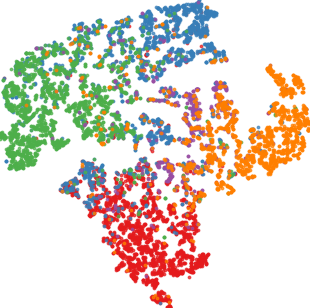}
  }
  \caption{Visualization of the representations learned in all methods (D $\to$ C).}
  \label{fig:dblp_citation}
\end{figure*}

\begin{figure*}[t]
  \centering
  \subfigure[$\gamma$ vs $\lambda_1$]{
    \label{fig:gamma_lambda_1}
    \includegraphics[width=0.32\textwidth]{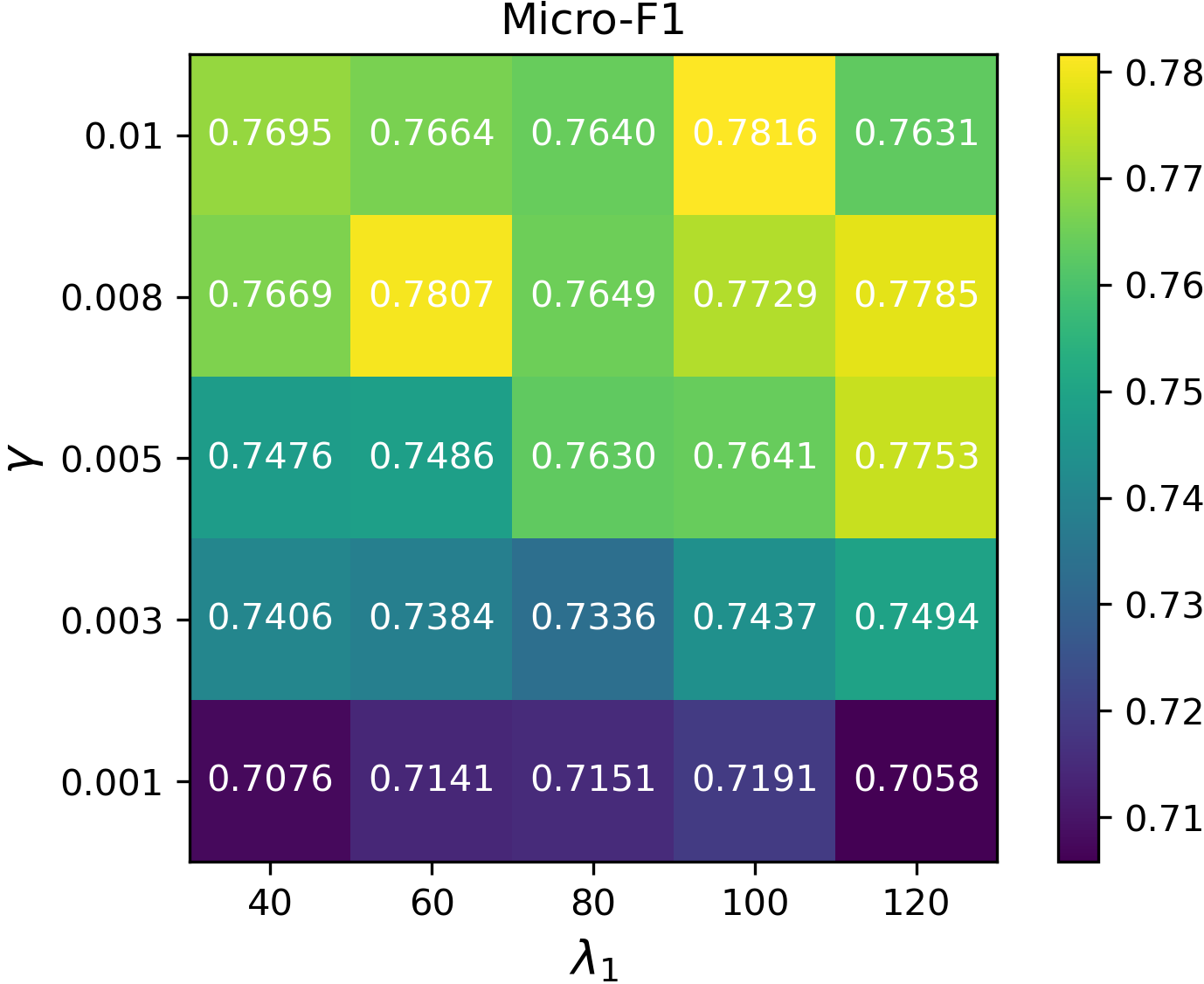}
  }
  \subfigure[$\gamma$ vs $\lambda_2$]{
    \label{fig:gamma_lambda_2}
    \includegraphics[width=0.32\textwidth]{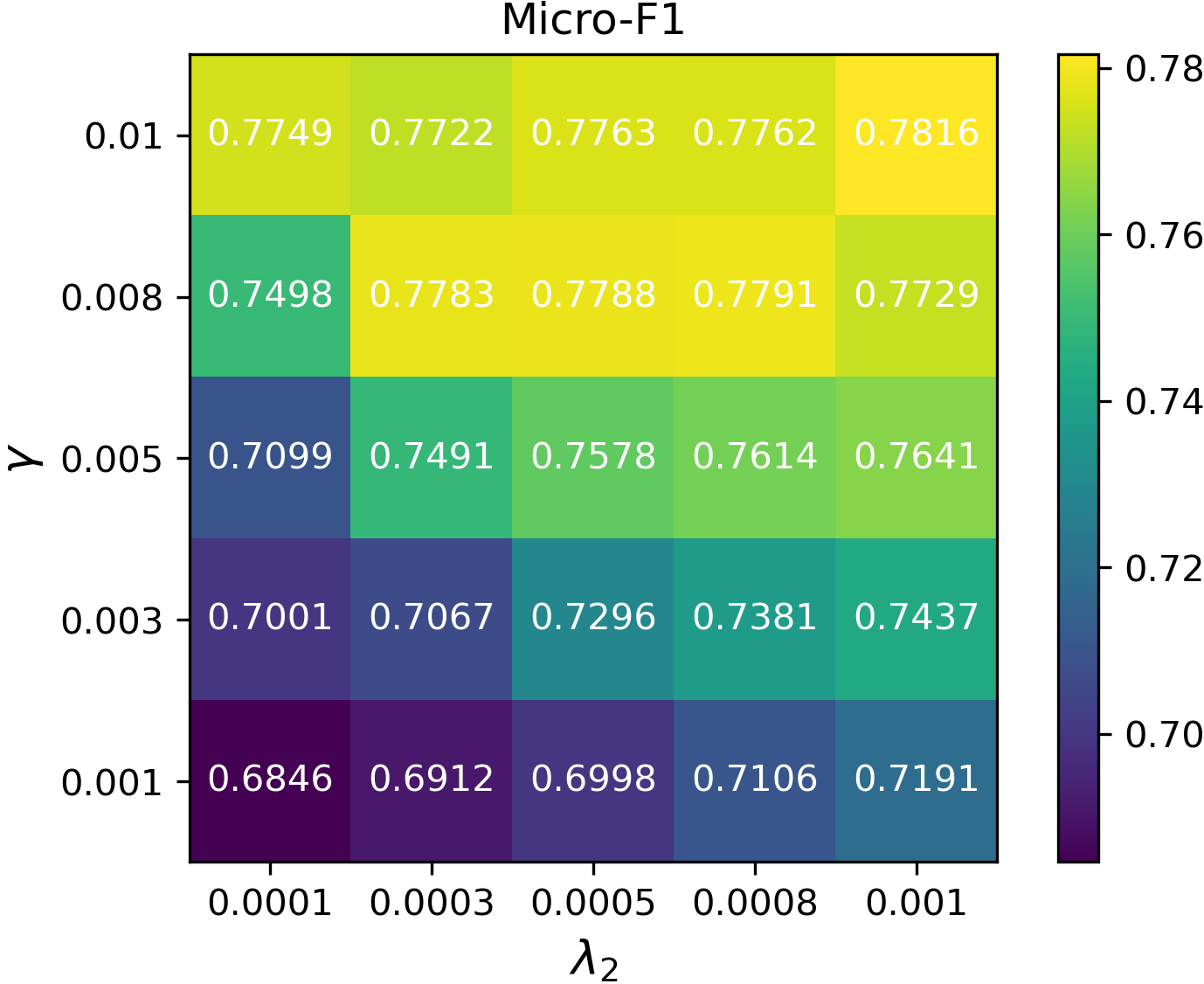}
  }
  \subfigure[$\lambda_1$ vs $\lambda_2$]{
    \label{fig:lambda_1_lambda_2}
    \includegraphics[width=0.31\textwidth]{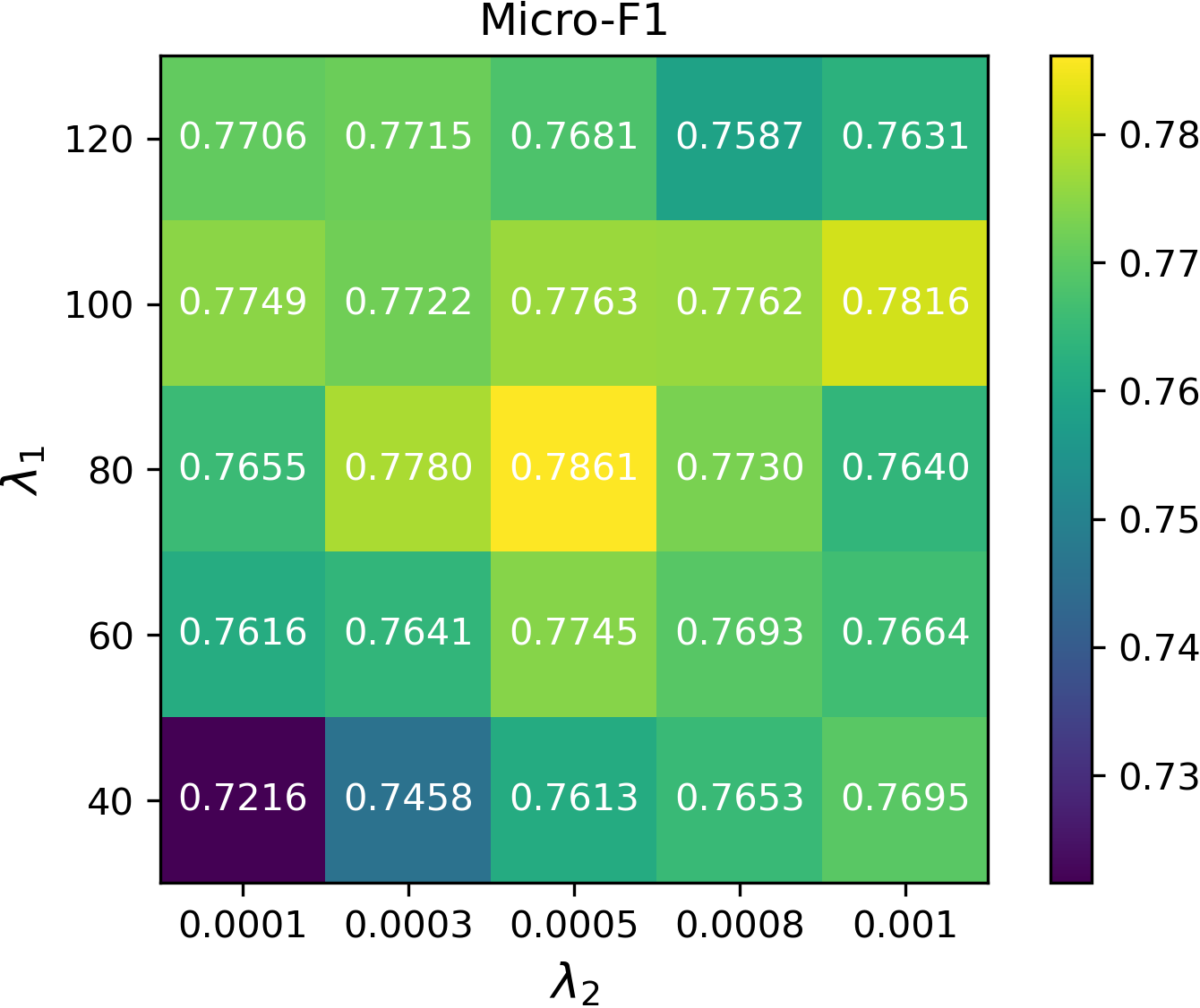}
  }
  \caption{Sensitivity analysis. Each figure shows the change of Micro-F1 scores with a pair of hyperparamaters.}
  \label{fig:all_gamma_lambda}
\end{figure*}

\begin{table*}[t]
    \centering
    \caption{ICDR(\%) ($\downarrow$) for D $\rightarrow$ C and A $\rightarrow$ C (best in \textbf{bold}; second-best \underline{underlined})}
    \scalebox{0.9}{
    \label{tab:ICDR}
    \begin{tabular}{cccccccccc}
        \toprule
          & DANN & UDAGCN & AdaGCN & StruRW & GRADE & SpecReg & A2GNN &PairAlign& DFT (ours)\\
        \midrule
        D $\rightarrow$ C & $94.63\pm 0.01$ & $81.09\pm 0.06$ & $72.95\pm 2.42$ &$99.73\pm 0.02$ & $80.58\pm 2.81$ & $83.42\pm 0.07$ &$\underline{71.12}\pm 0.12$ &$99.88\pm 0.13$& $\boldsymbol{59.43}\pm 0.33$ \\
        A $\rightarrow$ C & $88.52\pm 0.00$ & $81.96\pm 0.32$ & $72.31\pm 2.02$ &$99.58\pm 0.11$ & $84.31\pm 2.07$ & $79.68\pm 0.10$ & $\underline{71.89}\pm 0.16$&$98.93\pm 0.19$&$\boldsymbol{62.25}\pm 0.84$ \\
        \bottomrule
    \end{tabular}}
\end{table*}

\begin{table*}[t]
    \centering
    \caption{Micro-F1 scores (\%) on AdaGCN with and without the proposed decorrelation mechanism. The results are averaged over three runs (best in \textbf{bold}).}
    \scalebox{0.9}{
    \label{tab:adagcn_comparison}
    \begin{tabular}{ccccccc}
        \toprule
        Method & D $\rightarrow$ C & C $\rightarrow$ D & A $\rightarrow$ C & C $\rightarrow$ A & D $\rightarrow$ A & A $\rightarrow$ D \\
        \midrule
        AdaGCN & 65.44 & 68.35 & 60.60 & 64.79 & 59.43 & 63.86 \\
        AdaGCN w/ Decorrelation & \textbf{67.61} & \textbf{70.93} & \textbf{69.22} & \textbf{68.21} & \textbf{68.99} & \textbf{70.38} \\
        \bottomrule
    \end{tabular}}
\end{table*}

We perform case study into the two tasks D $\rightarrow$ C and A $\rightarrow$ C. Figure~\ref{fig:dblp_citation} illustrate the results obtained using all the baseline approaches for D $\rightarrow$ C. We additionally present A $\rightarrow$ C in Appendix~\ref{app:atoc}. Evidently from the figures, our proposed method achieves clear margins between classes. Only the state-of-the-art model of A2GNN has a similar qualitative separation level. This demonstrates that DFT is capable of generating a meaningful representations. 

In addition to 2-D visualization, we also quantitatively evaluate how representations within the same class are more closely clustered. To this end, we propose a metric called the intra-class distance ratio (ICDR) as follows. For a pair of nodes $i$ and $j$, and their representations $\vz_i$ and $\vz_j$, we define 
\begin{equation}
    \gD_\textup{intra}(i,j)=\norm{\vz_i - \vz_j}_2 \cdot \vone({y_i=y_j}),
\end{equation}
\begin{equation}
    \gD_\textup{inter}(i,j)=\norm{\vz_i - \vz_j}_2 \cdot \vone({y_i \neq y_j}),
\end{equation}
where $\vone$ is the indicator function. Next, we calculate the average distances within the same class and between different classes. Let $\abs{\cdot}$ denotes set cardinality. We define
\begin{equation}
    \overline{\gD}_\textup{intra} = \frac{\sum_{i=1}^{N_t} \sum_{j=i+1}^{N_t} \gD_\textup{intra}(i,j)}{\abs{\{(i,j): i < j, y_i=y_j\}}};
\end{equation}
\begin{equation}
    \overline{\gD}_\textup{inter} = \frac{\sum_{i=1}^{N_t} \sum_{j=i+1}^{N_t} \gD_\textup{inter}(i,j)}{\abs{\{(i,j): i < j, y_i \neq y_j\}}}.
\end{equation}
Finally, the ICDR can be calculated by taking their ratio as follows:
\begin{equation}
    \textup{ICDR} = \frac{\overline{\gD}_\textup{intra}}{\overline{\gD}_\textup{intra}+\overline{\gD}_\textup{inter}}.
\end{equation}

We present the ICDR results corresponding to the visualization in Table~\ref{tab:ICDR}. Notably, compared with all the baseline methods, DFT achieves a significantly lower ICDR, indicating that representations within the same class are much closer compared to those with different labels. This confirms the successful representation learning of our approach. We also present a different metric in Appendix~\ref{app:silhouette}.

\subsection{Parameter Sensitivity}
We perform sensitivity analysis of DFT on $\lambda_1$, $\lambda_2$ and $\gamma$. The experiments are conducted on $D\rightarrow C$, where we report Micro-F1 scores with $\gamma$, $\lambda_1$ and $\lambda_2$ from \{0.001, 0.003, 0.005, 0.008, 0.01\}, \{40, 60, 80, 100, 120\} and \{0.0001, 0.0003, 0.0005, 0.0008, 0.001\}, respectively. In Figure~\ref{fig:gamma_lambda_1}, we set $\lambda_2$ to 0.001 and report scores for various values of $\gamma$ and $\lambda_1$. In Figure~\ref{fig:gamma_lambda_2}, we set $\lambda_1$ to 0.01 and report scores for various values of $\gamma$ and $\lambda_2$. In Figure~\ref{fig:lambda_1_lambda_2}, we set $\gamma$ to 100 and report scores for various values of $\lambda_1$ and $\lambda_2$.

\subsection{Generalizability of Decorrelated GCN}
\label{subsec:decorrelation_generalizability}
To validate that the contribution of our proposed decorrelated GCN is robust and not limited to our specific choice of backbone framework, we conduct an additional experiment using AdaGCN~\citep{dai2022graph}. In this experiment, we replace the standard GCN layers within the AdaGCN framework with our proposed decorrelated GCN layers, while keeping all other components and hyperparameters identical to the original implementation. The comparative results are reported in Table~\ref{tab:adagcn_comparison}. The integration of decorrelated GCN layers yields consistent performance improvements across all six transfer tasks compared to the original AdaGCN. These findings demonstrate that the proposed decorrelation objective effectively enhances the representation learning capability of GCN-based models, regardless of the underlying backbone architecture.

\section{Conclusion}\label{sec:conclusion}
We have proposed improving GDA by alleviating interdependencies of node representations. Specifically, we have addressed the research questions: \textbf{RA1.} Observation of conditional shifts reveals the interdependencies among node features. \textbf{RA2.} The generalization performance can be improved by reducing interdependencies of node representations. \textbf{RA3.} Message-passing propagation may hinder node interdependencies. We have designed specific components aimed at reducing node dependencies, which have demonstrated strong performance on real-world GDA datasets.

There are several directions for future work. First, we will explore decorrelation approaches for graph-level GDA tasks. Additionally, we will investigate in conditional shift in other types of graph data such as dynamic graphs. Finally, we will extend the study to broader types of domains beyond graphs where data distributions are naturally non-i.i.d.

\newpage
\bibliographystyle{ACM-Reference-Format}
\balance
\bibliography{reference}

\appendix
\section*{Appendix}
\allowdisplaybreaks
\section{Additional Generalization Analysis}\label{app:generalization}
In addition to the markov chain considered in the main text, we consider modeling interdependencies of node representations using dependency graphs~\cite{janson2004large}, where dependency relations are confined to adjacent nodes. In our context, we call a graph $\gG_\textup{dep}$ with node set $\gV$ a dependency graph for $\{\rvx_s(v)\}_{v \in \gV}$ if $\{\rvx_s(v)\}_{v \in \gV_1}$ and $\{\rvx_s(v)\}_{v \in \gV_2}$ are independent whenever $\gV_1 \subset \gV$ and $\gV_2 \subset \gV$ are not adjacent in $\gG_\textup{dep}$. We assume that the dependency graph of node features coincides with the underlying graph of the graph dataset. 

Each dependency graph is associated with a metric named forest complexity, denoted as $\Lambda(\gG)$. It measures how close a graph is to a forest by determining the minimum number of node merges required to eliminate all cycles. For the definition and a detailed discussion, we refer to \cite{zhang2019mcdiarmid}. 
We have the following generalization bound, of which the proof is presented in Appendix~\ref{app:proof_inequality}.
\begin{theorem}\label{thm:dep_graph}
Suppose the hypothesis class $\gH$ has a VC-dimension of $d$ and comprises $K$-Lipschitz functions.
Let
\begin{equation}
    \epsilon^* = \min_{h \in \gH} \epsilon_s(h) + \epsilon_t(h).
\end{equation}
Let $N_s$ data points be sampled from the same marginal distribution $\mu_s$ with a dependency graph denoted as $\gG_\textup{dep}$, for which $\Lambda(\gG_\textup{dep})$ denotes its forest complexity. Then with a probability of at least $1-\delta$,
    \begin{equation}\label{eq:bd_graph}
    \begin{aligned}
        \epsilon_t(h) &\leq \hat{\epsilon}_s(h) + \sqrt{\frac{8d \log(e N_s / d)}{N_s}} + \\ 
        &\quad \sqrt{\frac{2 \log(2/\delta) \Lambda(\gG_\textup{dep})}{N_s^2}} + 2K W_1 (\mu_s, \mu_t) + \epsilon^*.
    \end{aligned}
    \end{equation}
\end{theorem}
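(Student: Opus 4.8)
The plan is to mirror the argument behind Theorem~\ref{thm:dep_graph_markov}, since the two bounds are identical except that a graph-dependent concentration inequality replaces the Markov-chain concentration. I would decompose the right-hand side of \eqref{eq:bd_graph} into three ingredients: a domain-adaptation step producing the $2K W_1(\mu_s,\mu_t)$ and $\epsilon^*$ terms; an expected uniform-deviation bound producing the VC term $\sqrt{8d\log(eN_s/d)/N_s}$; and a graph-dependent concentration producing $\sqrt{2\log(2/\delta)\Lambda(\gG_\textup{dep})/N_s^2}$.

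First I would establish the domain-adaptation step. Fix $h \in \gH$ and let $h^\star$ attain $\epsilon^* = \epsilon_s(h^\star) + \epsilon_t(h^\star)$. Since $d_\gY$ is a metric and each $\epsilon_r(\cdot)$ is an expectation of $d_\gY(h_0,\cdot)$, the pointwise triangle inequality gives $\epsilon_t(h) \le \epsilon_t(h^\star) + \E_{\mu_t}[d_\gY(h,h^\star)]$. Because $h$ and $h^\star$ are both $K$-Lipschitz, the map $\vx \mapsto d_\gY(h(\vx),h^\star(\vx))$ is $2K$-Lipschitz, so Kantorovich--Rubinstein duality yields $\E_{\mu_t}[d_\gY(h,h^\star)] \le \E_{\mu_s}[d_\gY(h,h^\star)] + 2K W_1(\mu_s,\mu_t)$. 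A second triangle inequality gives $\E_{\mu_s}[d_\gY(h,h^\star)] \le \epsilon_s(h) + \epsilon_s(h^\star)$, and collecting terms produces $\epsilon_t(h) \le \epsilon_s(h) + 2K W_1(\mu_s,\mu_t) + \epsilon^*$. Note this step never needs $h_0$ to be Lipschitz, only the hypotheses.

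Next I would pass from $\epsilon_s(h)$ to $\hat\epsilon_s(h)$ via the uniform deviation $\Phi := \sup_{h \in \gH}\bigl(\epsilon_s(h) - \hat\epsilon_s(h)\bigr)$, writing $\epsilon_s(h) \le \hat\epsilon_s(h) + \Phi$ and splitting $\Phi = \E[\Phi] + (\Phi - \E[\Phi])$. The expectation $\E[\Phi]$ I would bound by the standard VC/growth-function estimate $\sqrt{8d\log(eN_s/d)/N_s}$, using that each node feature has marginal $\mu_s$, so $\hat\epsilon_s(h)$ is pointwise unbiased. The fluctuation $\Phi - \E[\Phi]$ I would control with the McDiarmid-type inequality for graph-dependent variables from \cite{zhang2019mcdiarmid}: replacing a single $\rvx_s(v)$ alters $\Phi$ by at most $1/N_s$, and since the dependency graph of $\{\rvx_s(v)\}_{v\in\gV}$ is $\gG_\textup{dep}$, the forest complexity $\Lambda(\gG_\textup{dep})$ enters the sub-Gaussian variance proxy, giving with probability at least $1-\delta$ a deviation of order $\sqrt{2\log(2/\delta)\Lambda(\gG_\textup{dep})/N_s^2}$. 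Combining the three ingredients yields \eqref{eq:bd_graph} uniformly over $h \in \gH$.

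The main obstacle is the expected-deviation step under dependence. The classical bound on $\E[\Phi]$ relies on symmetrization with per-coordinate Rademacher signs, which is valid only when swapping a coordinate with its ghost copy preserves the joint law---precisely what fails once the features are graph-dependent, because $\rvx_s(v)$ is correlated with its neighbors. I would therefore either route the symmetrization through the shared marginal $\mu_s$, so that only marginal rather than joint structure is needed to bound the expected supremum, or invoke a dependent-data uniform-convergence estimate, keeping the entire effect of the dependency isolated in the concentration term through $\Lambda(\gG_\textup{dep})$. The remaining work is bookkeeping: verifying that the bounded-difference constant is exactly $1/N_s$ and that the forest complexity enters with the stated constant. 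Checking that the bound degenerates to the independent case when $\gG_\textup{dep}$ carries no edges provides a useful sanity check, mirroring the remark that $t_\textup{mix}=0$ recovers independence in Theorem~\ref{thm:dep_graph_markov}.
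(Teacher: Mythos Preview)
Your proposal is correct and follows essentially the same three-step decomposition as the paper: the Wasserstein domain-adaptation inequality $\epsilon_t(h)\le\epsilon_s(h)+2KW_1(\mu_s,\mu_t)+\epsilon^*$, a VC/Rademacher bound on $\E[\Phi]$, and the graph-dependent McDiarmid inequality of \cite{zhang2019mcdiarmid} for the fluctuation. The only discrepancy is the bounded-difference constant: the paper takes $c_i=2/N_s$ (which is what reproduces the exact constant $\sqrt{2\log(2/\delta)\Lambda(\gG_\textup{dep})/N_s^2}$ in \eqref{eq:bd_graph}), whereas your $1/N_s$ would yield a tighter but differently-stated bound; you are also more scrupulous than the paper about the symmetrization-under-dependence issue, which the paper simply absorbs into the cited Rademacher estimate.
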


The bound established in Theorem \ref{thm:dep_graph} calls for bounding $\Lambda(\gG_\textup{dep})$ for achieving effective generalization. Importantly, when the node features are independent, the dependency graph has no edge, which implies $\Lambda(\gG_\textup{dep}) = N_s$, the number of nodes. Even if features are not independent, it is important to decorrelate them as much as possible. For instance, if the dependency graph is very sparse such as a tree or a cycle, then $\Lambda(\gG_\textup{dep})$ is still $O(N_s)$. However, if the dependency graph is a grid graph, which still has some sparsity, $\Lambda(\gG_\textup{dep})$ already becomes $O(N_s^{3/2})$, which significantly hinders the generalization bound by increasing it from $O(N_s^{-1/2})$ to $O(N_s^{-1/4})$. This result highlights the significance of reducing interdependencies for representations among adjacent nodes.

\begin{table*}[t]
    \centering
    \vskip 1em
    \caption{Silhouette score ($\uparrow$) for D $\rightarrow$ C and A $\rightarrow$ C (best in \textbf{bold}; second-best \underline{underlined})}
    \scalebox{0.8}{
    \label{tab:silh_score}
    \begin{tabular}{cccccccccc}
        \toprule
          & DANN & UDAGCN & AdaGCN & StruRW & GRADE & SpecReg & A2GNN &PairAlign& DFT (ours)\\
        \midrule
        D $\rightarrow$ C & $0.0397\pm 0.0028$ & $0.0563\pm 0.0024$ & $0.0919\pm 0.0107$ &$-0.0170\pm 0.0016$ & $0.0290\pm 0.0029$ & $0.0563\pm 0.0017$ &$\underline{0.1150}\pm 0.0010$ &$-0.0183\pm 0.0021$& $\boldsymbol{0.2224}\pm 0.0071$ \\
        A $\rightarrow$ C & $0.0764\pm 0.0103$ & $0.1072\pm 0.0017$ & $0.0802\pm 0.0213$ &$-0.0184\pm 0.0026$ & $0.0395\pm 0.0078$ & $0.0665\pm 0.0012$ & $\underline{0.1494}\pm 0.0031$&$-0.0250\pm 0.0030$&$\boldsymbol{0.2608}\pm 0.0017$ \\
        \bottomrule
    \end{tabular}}
\end{table*}

\section{Proofs}\label{app:proofs}
\subsection{Proof of Theorem \ref{thm:dep_graph_markov} and Theorem \ref{thm:dep_graph}}\label{app:proof_inequality}
To prove Theorem \ref{thm:dep_graph_markov} and Theorem \ref{thm:dep_graph}, we first review relevant McDiarmid-type inequalities given in \cite[Corollary~2.10]{paulin2015concentration} and \cite[Theorem~3.6]{zhang2019mcdiarmid} respectively as follows.
\begin{lemma}\label{thm:mcdiarmid}
Given $\vc = (c_1, \cdots, c_n)^\T \in \R^n$. Suppose that $f$ is a real-valued function on $\R^n$ that satisfies the $\vc$-Lipschitz condition: for any $\rvx = (\rx_1, \cdots, \rx_n)^\T$ and $\rvx' = (\rx'_1, \cdots, \rx'_n)^\T$,
\begin{equation}\label{eq:def_cLip}
    \abs{f(\rvx) - f(\rvx')} \leq \sum_{i=1}^n c_i \vone_{\{\rx_i \neq \rx'_i\}}.
\end{equation}
\begin{enumerate}
    \item \cite{paulin2015concentration} Let $t_{\textup{mix}}$ be the mixing time for the markov chain $\rvx = (\vx_1, \cdots, \vx_n)$. Then given $\epsilon > 0$, it holds that
    \begin{equation}\label{eq:mixing}
        P \left( f(\rvx) - \E [f(\rvx)] \geq \epsilon \right) \leq \exp \left( - \frac{2 \epsilon^2}{9 t_{\textup{mix}} \norm{\vc}_\infty^2} \right).
    \end{equation}
    \item \cite{zhang2019mcdiarmid} Let $\gG_\textup{dep}$ be the dependency graph of a random variable $\rvx$. Then given $\epsilon > 0$, it holds that
    \begin{equation}\label{eq:mcdiarmid}
        P \left( f(\rvx) - \E [f(\rvx)] \geq \epsilon \right) \leq \exp \left( - \frac{2 \epsilon^2}{\Lambda(\gG_\textup{dep}) \norm{\vc}_\infty^2} \right).
    \end{equation}
\end{enumerate}
\end{lemma}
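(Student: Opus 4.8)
The statement collects two McDiarmid-type (bounded-differences) concentration inequalities that are already established in the cited references, so the plan is to present them through a single unifying argument and then match hypotheses to those references. The common backbone is the Doob martingale $M_i = \E[f(\rvx) \mid \vx_1, \ldots, \vx_i]$, with $M_0 = \E[f(\rvx)]$ and $M_n = f(\rvx)$, so that $f(\rvx) - \E[f(\rvx)] = \sum_{i=1}^n (M_i - M_{i-1})$; the problem then reduces to bounding the martingale increments and applying Azuma--Hoeffding. The $\vc$-Lipschitz hypothesis \eqref{eq:def_cLip} controls how far $f$ can move when a single coordinate is altered, and the whole subtlety is translating this per-coordinate control into an increment bound when the coordinates of $\rvx$ are dependent. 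In the independent case this yields the variance proxy $\sum_i c_i^2$, which for equal constants equals $N_s \|\vc\|_\infty^2$; each part below replaces the count $N_s$ by a dependence parameter.

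For Part 1, I would follow the \emph{Marton coupling} route. For each $i$ I would couple the conditional law of the future $(\vx_{i+1}, \ldots, \vx_n)$ given the prefix $(\vx_1, \ldots, \vx_i)$ against the same future given an altered prefix $(\vx_1, \ldots, \vx_{i-1}, \vx_i')$, choosing the coupling so that the two Markov trajectories coalesce at a rate dictated by the mixing time. The expected number of still-disagreeing coordinates $k$ steps ahead decays geometrically, and summing this geometric series yields a mixing matrix whose operator norm scales as $O(t_{\textup{mix}})$. Feeding this into the increment bound inflates the effective per-coordinate budget, and Azuma--Hoeffding then produces \eqref{eq:mixing} with the constant $9 t_{\textup{mix}}$.

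For Part 2, I would exploit the defining property of the dependency graph: coordinates indexed by non-adjacent vertex sets are independent. Revealing the coordinates in a fixed order, an increment decouples from the past whenever the neighborhood of the newly revealed vertex has already been exposed, which happens cleanly when $\gG_\textup{dep}$ is a forest. For a general graph the obstruction is precisely the cycle structure, and the forest complexity $\Lambda(\gG_\textup{dep})$---the minimum number of vertex merges that destroys all cycles---is exactly the quantity that measures it. Carrying the merged (forest-approximated) martingale through Azuma--Hoeffding replaces the count $N_s$ by $\Lambda(\gG_\textup{dep})$, giving the proxy $\Lambda(\gG_\textup{dep}) \|\vc\|_\infty^2$ in \eqref{eq:mcdiarmid}.

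The main obstacle in both parts is obtaining the correct constant multiplying the dependence parameter---the factor $9$ alongside $t_{\textup{mix}}$ in Part 1 and the sharp appearance of $\Lambda(\gG_\textup{dep})$ in Part 2---since these demand a quantitative accounting of how coupling discrepancies (respectively, cycle-induced dependencies) accumulate across all $n$ increments, rather than a merely qualitative decoupling. As these constants are already proved in \cite{paulin2015concentration} and \cite{zhang2019mcdiarmid}, in the write-up I would invoke those results directly and only verify that our setting matches their hypotheses, namely stationarity of the chain with invariant measure $\mu_s$ in Part 1 and the identification of the dependency graph with the underlying data graph in Part 2.
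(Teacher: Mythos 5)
Your overall strategy coincides with the paper's: the paper offers no independent proof of this lemma either, presenting it purely as a review of \cite{paulin2015concentration} (Corollary~2.10) and \cite{zhang2019mcdiarmid} (Theorem~3.6), and your Doob-martingale backbone with a Marton-coupling argument for Part~1 and a forest-approximation argument for Part~2 is a faithful summary of how those two references actually proceed. So at the level of proof architecture there is nothing to object to, and your decision to invoke the references for the sharp constants mirrors what the paper does.

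There is, however, one concrete step in your Part~1 that would fail at exactly the hypothesis-verification stage you defer to the write-up: the Marton-coupling route does \emph{not} produce \eqref{eq:mixing} in the form stated. Paulin bounds the exponent through $\norm{\Gamma\vc}_2^2$, and his Corollary~2.10 is stated with the Euclidean quantity $\norm{\vc}_2^2=\sum_i c_i^2$, not $\norm{\vc}_\infty^2$; the version with $\norm{\vc}_\infty^2$ is strictly stronger and is in fact false in general. A sanity check: for an empirical mean of $n$ i.i.d.\ bounded variables one has $c_i=1/n$ and $t_{\textup{mix}}=1$, so \eqref{eq:mixing} as written would give sub-Gaussian deviations of order $1/n$, contradicting the CLT's $1/\sqrt{n}$ fluctuations; the trouble is that $t_{\textup{mix}}$, unlike $\Lambda(\gG_\textup{dep})$, does not grow with $n$ to compensate for replacing $\sum_i c_i^2$ by $\norm{\vc}_\infty^2$. (Part~2 is safe: the bound in \cite{zhang2019mcdiarmid} really is in terms of $\Lambda(\gG_\textup{dep})\norm{\vc}_\infty^2$, and $\Lambda$ scales at least linearly in $n$, so \eqref{eq:mcdiarmid} matches the source.) This mismatch is inherited from the paper's own transcription rather than introduced by you, but your claim that Azuma--Hoeffding ``then produces \eqref{eq:mixing} with the constant $9t_{\textup{mix}}$'' should be corrected to the $\sum_i c_i^2$ form; downstream, with $c_i = 2/N_s$ this changes the mixing-time term of the generalization bound from $\sqrt{18\log(2/\delta)t_{\textup{mix}}/N_s^2}$ to $\sqrt{18\log(2/\delta)t_{\textup{mix}}/N_s}$.
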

Now we are ready to present our proof.
\begin{proof}
We only need to prove \eqref{eq:bd_graph}. Once we establish \eqref{eq:bd_graph}, replacing $\Lambda(\gG_\textup{dep})$ with $9 t_{\textup{mix}}$ will yield \eqref{eq:bd_mixing}. First, we recall a generalization bound when using the Wasserstein-1 distance as follows. Consider a hypothesis class $\gH$ comprising $K$-Lipschitz functions. For any $h \in \gH$, standard domain adaptation analysis (e.g., in slightly various forms in \cite{redko2017theoretical, shen2018wasserstein, you2023graph}) yields 
\begin{equation}\label{eq:previous_bd}
    \epsilon_t(h) \leq  \epsilon_s(h) + 2KW_1 (\mu_s, \mu_t) + \epsilon^*.
\end{equation}

We proceed to bound $\epsilon_s(h)$. According to statistical learning theory (e.g., \cite[Ch.~28.1]{shalev2014understanding}), given any sample $\gS$ of data points in the source domain, the Rademacher complexity of $\epsilon_s \circ \gH \circ \gS$, denoted as $R(\epsilon_s \circ \gH \circ \gS)$, satisfies
\begin{equation}
    R(\epsilon_s \circ \gH \circ \gS) \leq \sqrt{\frac{2d \log(e N_s d)}{N_s}}.
\end{equation}
Define 
\begin{equation}
    \Delta := \sup_{h \in \gH} \epsilon_s(h) - \hat{\epsilon}_s(h).
\end{equation}
Clearly, by definition of the Rademacher complexity, 
\begin{equation}\label{eq:bd_EDelta}
    \E [\Delta] \leq 2 \E_\gS \left[ R(\epsilon_s \circ \gH \circ \gS) \right] \leq \sqrt{\frac{8d \log(e N_s d)}{N_s}}.
\end{equation}
It remains to bound the deviation of $\Delta - \E [\Delta]$. We proceed by applying \eqref{eq:mcdiarmid} with $f = \Delta$. In this case, we need to determine the appropriate Lipschitz constant $\vc$ for which $\Delta$ is $\vc$-Lipschitz. To do this, consider that $\epsilon_s(h) - \hat{\epsilon}_s(h)$ may differ by at most $2/N_s$. Therefore, \eqref{eq:def_cLip} holds with $c_1 = \cdots = c_n = 2/N_s$. Consequently, we conclude that $\Delta$ is a $(2/N_s)\vone$-Lipschitz function.

Given $\vc = (2/N_s)\vone$, we have $\norm{\vc}_\infty = 2/N_s$. Setting
\begin{equation}
    \epsilon = \frac{2}{N_s} \sqrt{\frac{\log(2/\delta) \Lambda(\gG_\textup{dep})}{2}}
\end{equation}
in \eqref{eq:mcdiarmid} yields that with probability at least $1-\delta$,
\begin{equation}\label{eq:bd_DeltaEDelta}
    \Delta - \E \Delta \leq \sqrt{\frac{2 \log(2/\delta) \Lambda(\gG_\textup{dep})}{N_s^2}}.
\end{equation}

Finally, combining \eqref{eq:previous_bd}, \eqref{eq:bd_EDelta} and \eqref{eq:bd_DeltaEDelta} yields \eqref{eq:bd_graph}.
\end{proof}

\begin{figure*}[t]
    \centering
    \includegraphics[width=.85\textwidth]{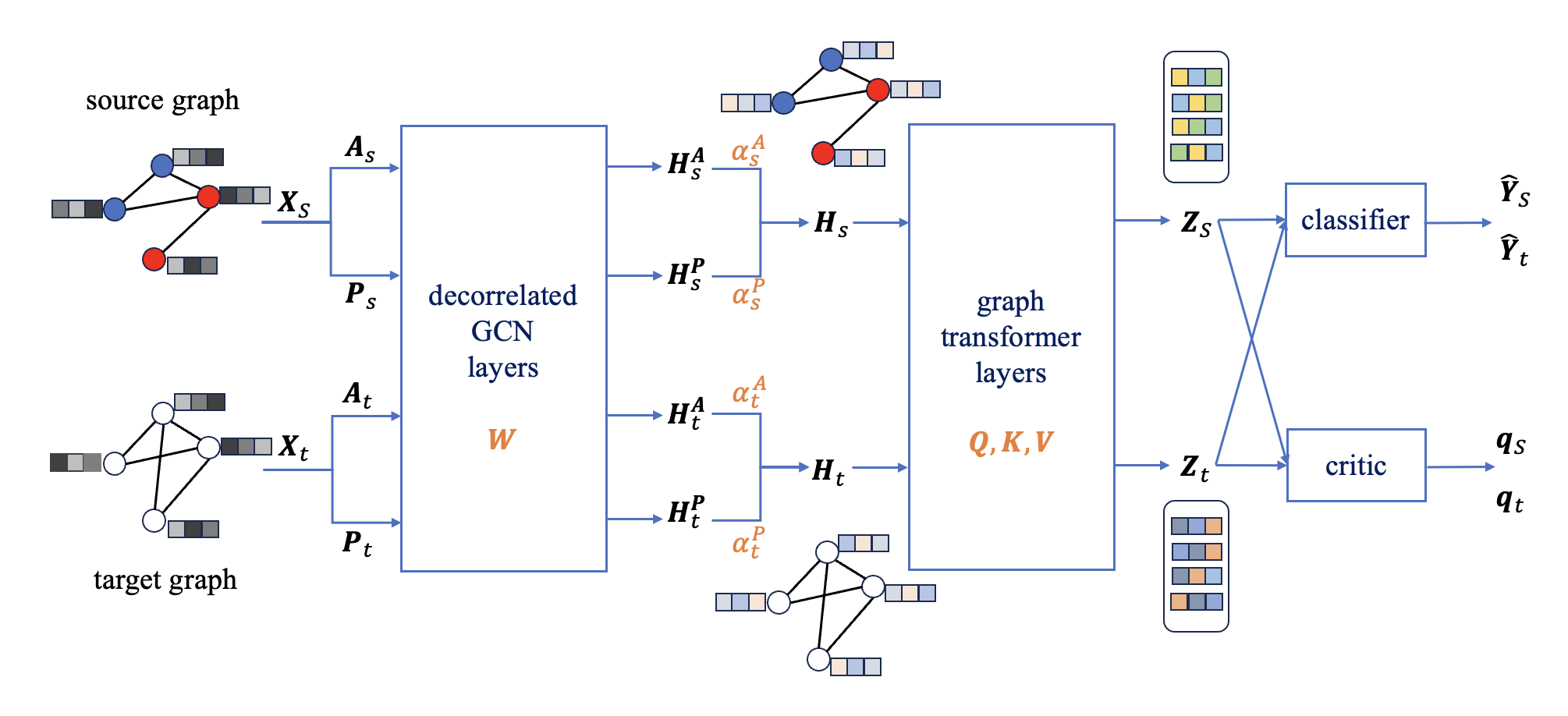}
    \caption{The network architecture of DFT.}
    \label{fig:model}
\end{figure*}

\subsection{Proof of Theorem \ref{thm:high_corr}}\label{app:proof_energy}
\begin{proof}
For simplicity we denote $\mB = \tilde{\mA}^{2k}$. We express the left-hand side of \eqref{eq:exp_HHT} as
\begin{align*}
    &\quad \E \left[ \norm{\mH^{(k)} \mH^{(k) \T}}_{\operatorname{F}}^2 \right] \\
    &= \E \left[ \tr ( \mX^\T \mB \mX )^2 \right] \\
    &= \E \left[ \sum_{i=1}^N \sum_{j=1}^N \sum_{p=1}^N \sum_{q=1}^N \sum_{l=1}^D \sum_{m=1}^D  X_{il} B_{ij} X_{jm} X_{pm} B_{pq} X_{ql} \right].
\end{align*}
Since the entries of $\mX$ are i.i.d. sampled from $\gN(0,1)$, the expectation of $X_{il} B_{ij} X_{jm} X_{pm} B_{pq} X_{ql}$ is $0$ unless one of the following cases happens: (1) $i=q$, $j=p$, but $l \neq m$; (2a) $l=m$, $i=q$, $j=p$, but $i \neq j$; (2b) $l=m$, $i=j$, $p=q$, but $i \neq p$; (2a) $l=m$, $i=p$, $j=q$, but $i \neq j$; (3) $l=m$ and $i=j=p=q$. 

Note that $\E[\rx^2] = 1$ and $\E[\rx^4] = 3$ for $\rx \sim \gN(0,1)$. Therefore, we have the following:
\begin{itemize}
    \item case (1), $\E[X_{il} B_{ij} X_{jm} X_{pm} B_{pq} X_{ql}] = B_{ij}B_{ji}$
    \item case (2a), $\E[X_{il} B_{ij} X_{jm} X_{pm} B_{pq} X_{ql}] = B_{ij}B_{ji}$
    \item case (2b), $\E[X_{il} B_{ij} X_{jm} X_{pm} B_{pq} X_{ql}] = B_{ii}B_{pp}$
    \item case (2c), $\E[X_{il} B_{ij} X_{jm} X_{pm} B_{pq} X_{ql}] = B_{ij}B_{ij}$
    \item case (3), $\E[X_{il} B_{ij} X_{jm} X_{pm} B_{pq} X_{ql}] = 3B_{ii}$
\end{itemize}

Therefore, the above expectation reduces to
\begin{align*}
    &\quad \E \left[ \norm{\mH^{(k)} \mH^{(k) \T}}_{\operatorname{F}}^2 \right] \\
    &= D(D-1) \sum_{i,j} B_{ij} B_{ji} + D \sum_{i \neq j} B_{ij} B_{ji} + \\
    &\qquad D \sum_{i \neq p} B_{ii} B_{pp} + D \sum_{i \neq j} B_{ij} B_{ij} + D \sum_{i} 3B_{ii} \\
    &= D(D-1) \sum_{i,j} B_{ij}^2 + D \sum_{i \neq j} B_{ij}^2 + \\
    &\qquad D \sum_{i \neq p} B_{ii} B_{pp} + D \sum_{i \neq j} B_{ij}^2 + D \sum_{i} 3B_{ii}^2 \\
    &= D^2 \sum_{i,j} B_{ij}^2 + \\
    &\qquad D \sum_{i \neq j} B_{ii} B_{jj} + D \sum_{i \neq j} B_{ij}^2 + D \sum_{i} 2B_{ii}^2 \\
    &= D^2 \sum_{i,j} B_{ij}^2 + D \sum_{i, j} B_{ii} B_{jj} + D \sum_{i, j} B_{ij}^2 \\
    &= D \sum_{i,j} (D+1) B_{ij}^2 +  B_{ii} B_{jj}.
\end{align*}
Now that we have established \eqref{eq:exp_HHT}. Since all the terms are positive, we only need to show that $(\tilde{\mA}^{2k})_{ij}$ increases with $k$ for any $i,j$. Note that $(\tilde{\mA}^{2k})_{ij}$ is the number of different paths of length $2k$ that traverses from node $i$ to node $j$. Since $\tilde{\mA} = \mA + \mI$ is the adjacency matrix with self-loops, a path of length $2k$ can be extended to a path of length $2(k+1)$ by adding two self-loops. Consequently, the number of paths of length $2(k+1)$ is larger than that of paths of length $2k$. Therefore, each entry of $\tilde{\mA}^{2k}$ increases with $k$.

\end{proof}

\section{Silhouette score}\label{app:silhouette}
To gain deeper insight into the quality of the learned target-domain representations, we adopt the Silhouette Score~\cite{rousseeuw1987silhouettes} as an unsupervised metric to measure the compactness and separability of different classes in the target feature space. A higher Silhouette Score indicates that data points are well clustered within the same class and are well separated from other classes. We compare our method against several baselines using this metric. The results, summarized in Table~\ref{tab:silh_score}, demonstrate that our approach consistently achieves better clustering quality across datasets.

\newpage

\section{More Details of the Model}\label{app:methodology}
We present more details of DFT for reproducibility. The overall architecture is illustrated in Figure~\ref{fig:model}. 

\subsection{Network Architecture}\label{app:network}
General feature-based domain adaptation methods have widely followed the DANN framework introduced by \citet{ganin2016domain}. In the DANN framework, a shared feature extractor is applied to both the source and target domain. The objective is to learn domain-invariant representations in both domains. Subsequently, the source domain representation is sent through a source classifier, while both source and target domain representations are sent through a domain classifier. Based on DANN, the UDAGCN framework, introduced by \citet{wu2020unsupervised}, has had a substantial impact on the domain adaptation for graph data and has served as a foundation for subsequent research such as~\cite{you2023graph}. 

In addition to DANN, UDAGCN incorporates both adjacency-based and random walk-based information while leveraging an inter-graph attention scheme. Specifically, for $r=s,t$, UDAGCN first creates point-wise mutual information (PPMI) matrices $\mP_r$ from random walk samplers on the graphs. Subsequently, GCN layers are employed to process the input features $\mX_r$ through message passing, leveraging both $\mA_r$ and $\mP_r$, resulting in the generation of latent representations $\mH_r^\mA$ and $\mH_r^\mP$. Following this, attention coefficients $\alpha_r^\mA$ and $\alpha_r^\mP$ are learned as output of a separate neural network. Finally, the aggregated representation $\mH_r$ for each domain is obtained via the following expression:
\begin{equation}
    \mH_r = \alpha_r^\mA \mH_r^\mA + \alpha_r^\mP \mH_r^\mP, \quad r = s, t. \label{attention}
\end{equation}
In our case, for each domain $r=s,t$, we have two features $(\mH_r^\mA)^{(L)}$ and $(\mH_r^\mP)^{(L)}$ following $L$ layers of decorrelated feature extraction as in~\eqref{eq:grad_des_simple}. These are further processed by the graph transformer layers to produce features $\mZ_r$, $r=s,t$.

The representations obtained by the transformer layer are then fed into a classifier, which is implemented as a dense neural network denoted by $f_\textup{clf}$ with softmax activations. The dense network is shared across both source and target domains. This classifer produces output vectors $\hat{\mY}_r = f_\textup{clf}(\mZ_r) \in \R^{N_r \times C}$, $r = s,t$.

The upper bounds in \eqref{eq:bd_graph} and \eqref{eq:bd_mixing} both suggest minimizing the Wasserstein-1 distance between representation distributions in source and target domains. To this end, similarly to \cite{dai2022graph}, we adopt an adversarial training approach. Specifically, we take a dense neural network $f_\textup{critic}$ as the critic for distinguishing representations from source and target domains, which produces a real number for each node: $\vq_r = f_\textup{critic}(\mZ_r) \in \R^{N_r}$, $r = s,t$.

\subsection{Loss Functions}\label{app:loss}
We describe the loss terms in detail as follows.
The source classifier loss $\gL_s$ is a standard cross-entropy loss for the labeled data in the source domain:
\begin{equation}
    \gL_s = -\frac{1}{N_s}\sum_{i=1}^{N_s} \vy_{s,i}^\T \log(\hat{\vy}_{s,i}),
\end{equation}
where $\vy_{s,i}$ is the one-hot class label corresponding to the $i$-th node feature for the source graph, and $\hat{\vy}_{s,i}$ is the $i$-th row of $\hat{\mY}_s$. Similarly, the target classifier loss $\gL_t$ is given by
\begin{equation}
    \gL_t = -\frac{1}{N_t}\sum_{i=1}^{N_t} \hat{\vy}_{t,i}^\T \log(\hat{\vy}_{t,i}),
\end{equation}
where only $\hat{\vy}_{t,i}$, the $i$-th row of $\hat{\mY}_t$ is used to calculate the entropy term due to the unavailability of target labels.

The critic loss $\gL_\textup{critic}$ and the penalty term $\gL_{\textup{gp}}$ inherit the loss function of the Wasserstein GAN in \cite{gulrajani2017improved}. Specifically,
\begin{equation}
    \gL_\textup{critic} = \frac{1}{N_s}\sum_{i=1}^{N_s} q_{s,i} - \frac{1}{N_t}\sum_{i=1}^{N_t} q_{t,i},
\end{equation}
where $q_{s,i}$ is the $i$-th entry of $\vq_s$ and $q_{t,i}$ is the $i$-th entry of $\vq_t$. Moreover, the gradient penalty ensures 1-Lipschitz condition with
\begin{equation}
\begin{aligned}
    \gL_\textup{gp} &= \sum_{r=s,t} \frac{1}{N_r} \sum_{i=1}^{N_r} \left( \norm{ \nabla f_\textup{critic} \left( z_{r,i} \right) }_2 - 1 \right)^2.
\end{aligned}
\end{equation}

\section{Validation of Covariate Shift}\label{app:covariate_shift}
To validate the covariate shift assumption, we consider the following visualization results. 

First, we examine $p(y|\rvx)$, where $\rvx$ represents raw node features, in both source and target domains for the ``Citation $\rightarrow$ DBLP'' scenario. Although we do not have access to the true values of $p(y|\rvx)$, we estimate them by a nearest-neighbor approach. For each feature among the samples, we find its 128 nearest neighbors in the source domain and assign $y_s$ based on a majority vote of their labels. The same process is applied in the target domain to obtain $y_t$. We estimate $p(y_s=y_t)$ by calculating the fraction of samples $\rvx$ with $y_s=y_t$. Across all samples in both domains, we find $p(y_s=y_t) \approx 0.70$. In comparison, when we randomly apply a label shuffling in the target domain, we obtain $p(y_s=y_t) \approx 0.22$. The substantially higher probability in the original setup highlights the presence of covariate shift even in the raw node feature space.

Second, for scenarios ``Citation $\rightarrow$ DBLP'' and ``Blog1 $\rightarrow$ Blog2'', we uses t-SNE to visualize the raw node features of two dominant classes in Figure~\ref{fig:conv_vis}. We randomly sample 10\% of the nodes from the Citation dataset and 20\% from the Blog dataset for better visibility. Since we use raw features, the two classes are not well-separated in the figures. Nevertheless, nearby points still show similar colors no matter they are from she source (``x'') or target (``o'') domain. We do not observe any noticeable shift of $p(y|\rvx)$ between the two domains from the figures.

\begin{figure*}[t]
  \centering
  \subfigure[Citation $\rightarrow$ DBLP]{
    \label{fig:cov_citation_dblp}
    \includegraphics[width=0.4\textwidth]{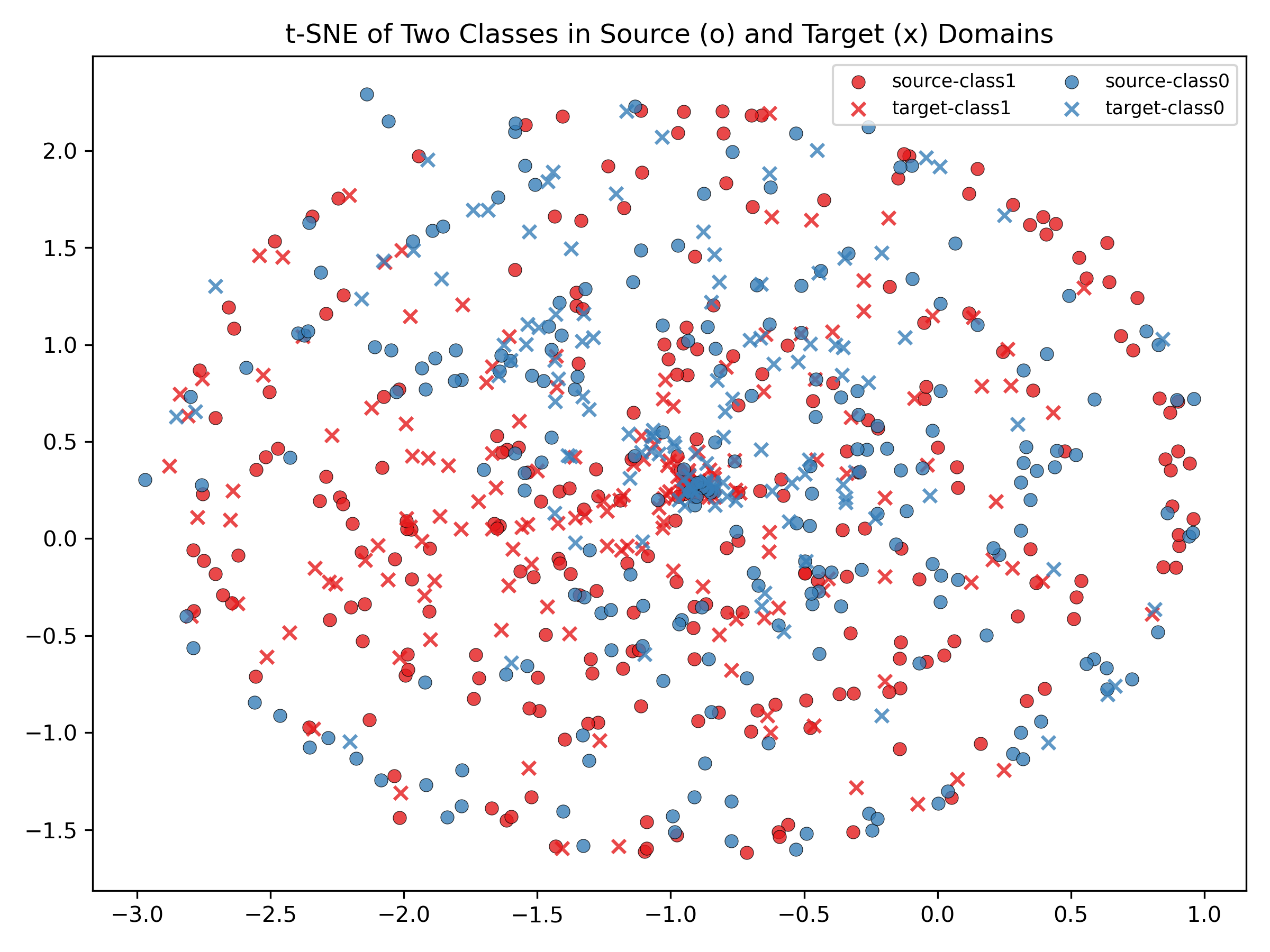}
  }
  \subfigure[Blog1 $\rightarrow$ Blog2]{
    \label{fig:cov_blog}
    \includegraphics[width=0.4\textwidth]{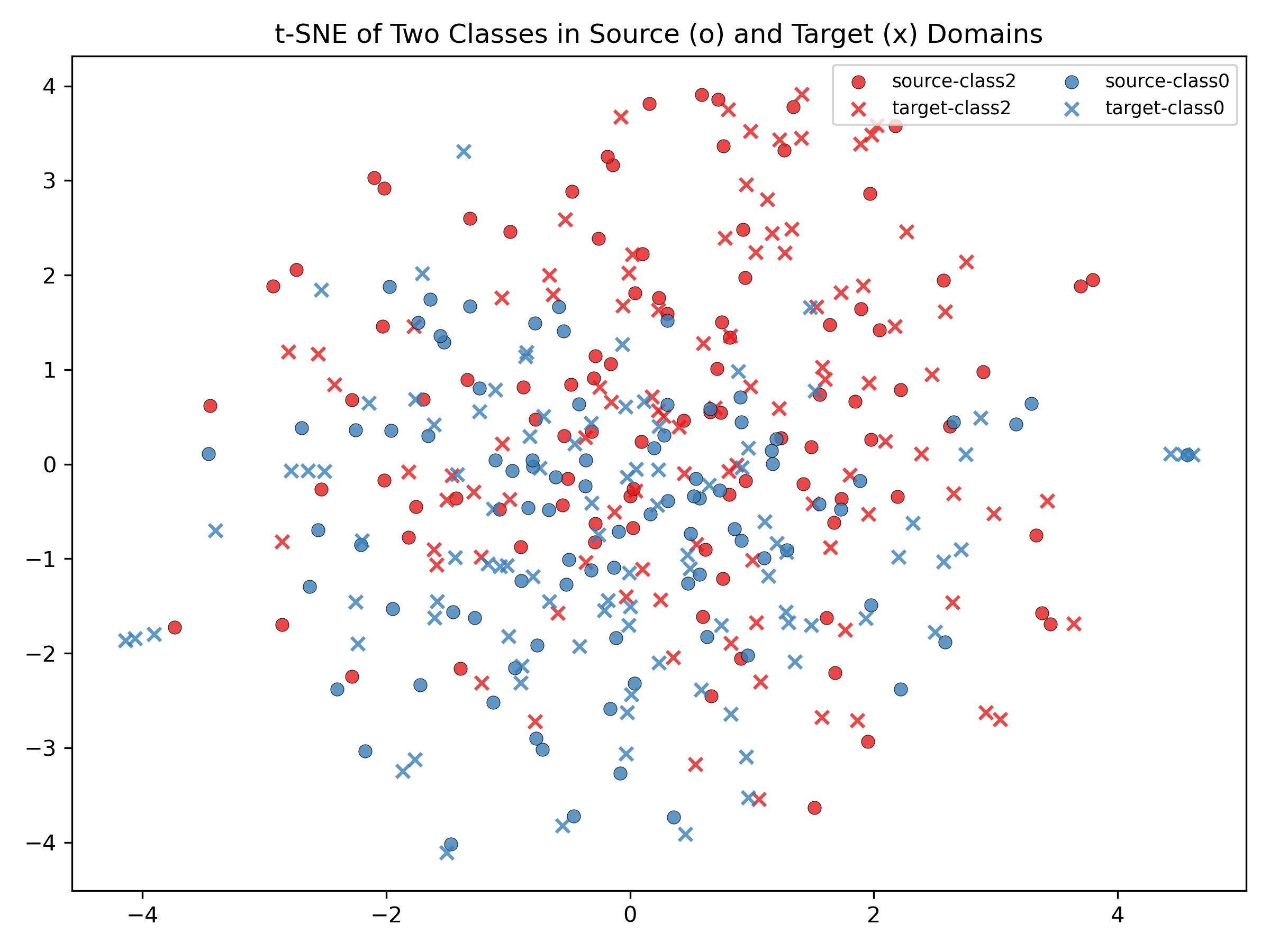}
  }
  \caption{Visualization of raw node features. We choose two dominant classes in two scenarios, labeled with different colors. Nodes in the source and target domains are labeled with ``x'' and ``o'', respectively.}
  \label{fig:conv_vis}
\end{figure*}

Finally, we also distinguish learned representations from the source (``x'') and target (``o'') domains using t-SNE. Specifically, we perform t-SNE on the features learned by the model for two domain adaptation tasks: DBLP (D) $\rightarrow$ Citation (C) and ACM (A) $\rightarrow$ Citation (C). To make the visualization clearer, we only randomly keep 20\% of the nodes from both the source and target domains before applying t-SNE. The results are shown in Figure~\ref{fig:tsne_source_target}. In the learned distributions, we cannot distinguish the two domains at all, whether considering the marginal or conditional distributions. This visualization confirms that our method has successfully learned a shared representation space, which has aligned the distributions on the two domains.

\begin{figure*}[t]
  \centering
  \subfigure[A $\rightarrow$ C]{
    \label{fig:tsne_acm_citation}
    \includegraphics[width=0.32\textwidth]{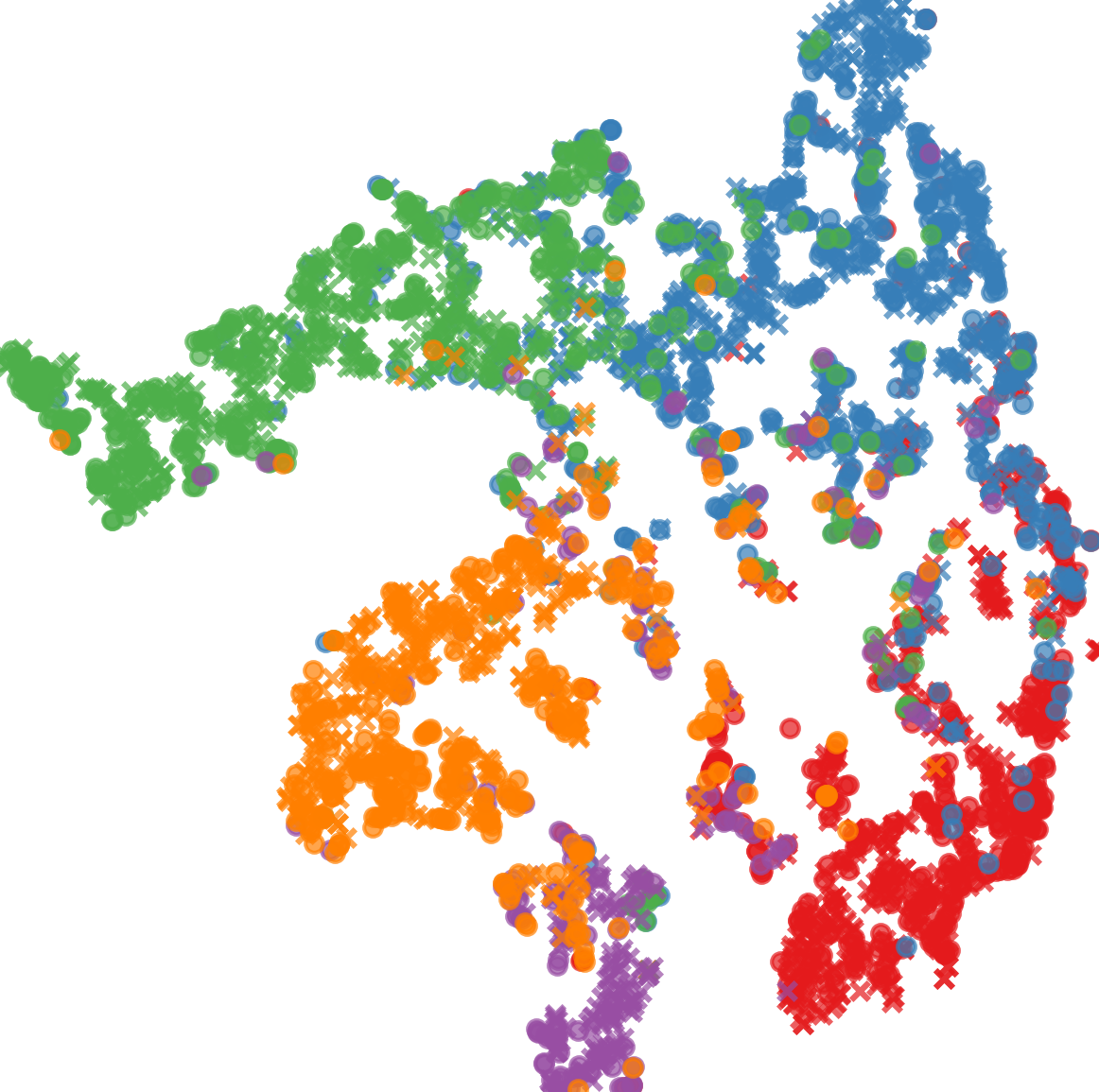}
  }
  \subfigure[D $\rightarrow$ C]{
    \label{fig:tsne_dblp_citation}
    \includegraphics[width=0.32\textwidth]{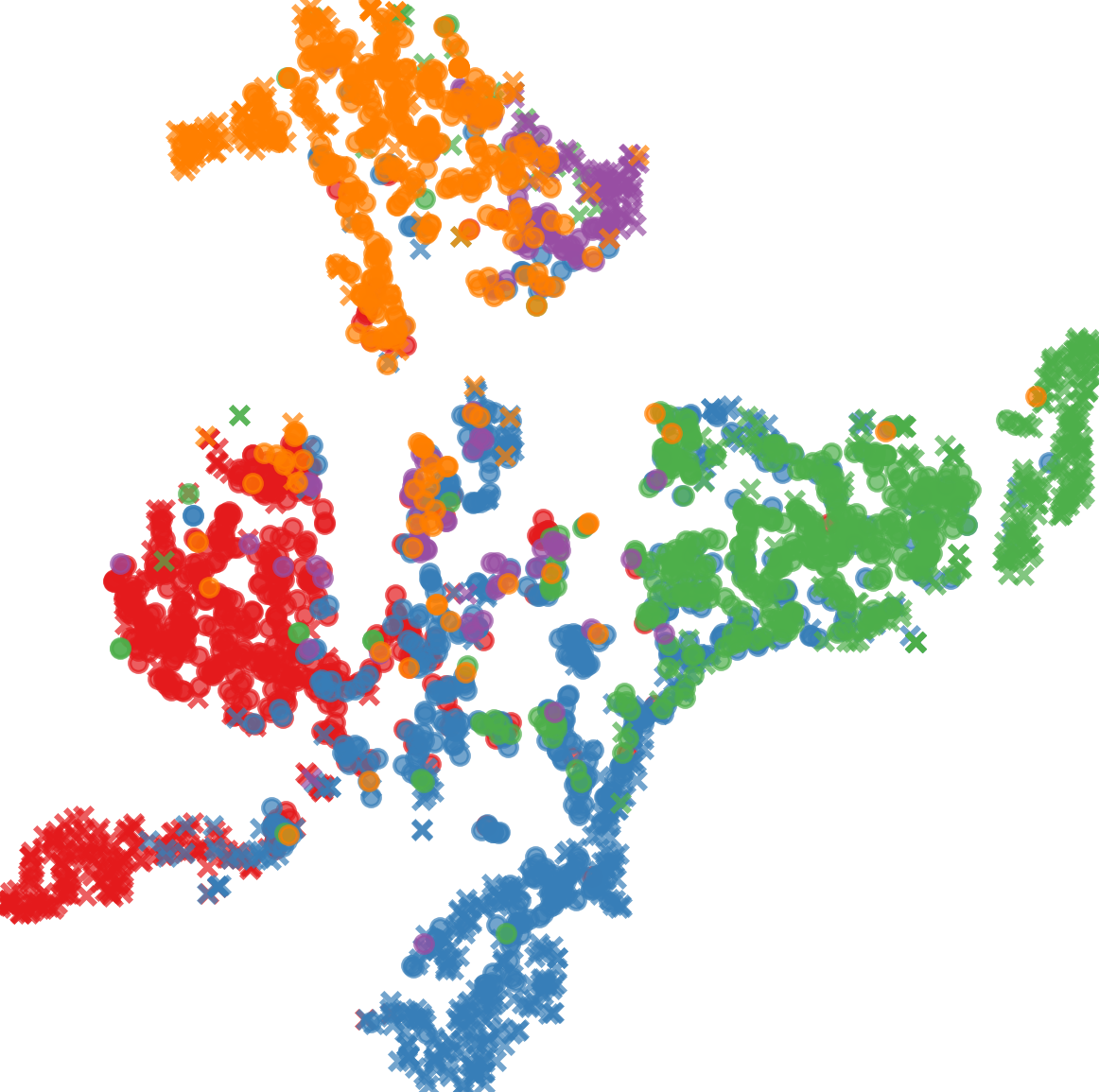}
  }
  \caption{Visualization of learned representations. Nodes in the source and target domains are labeled with ``x'' and ``o'', respectively.}
  \label{fig:tsne_source_target}
\end{figure*}

\section{More Experimental Analysis}\label{app:more_exp}
\subsection{Large-Scale Dataset}\label{app:largedataset}
To demonstrate the generalizability of our method on large-scale datasets, we conduct experiments on the MAG dataset from the Open Graph Benchmark (OGB)~\cite{hu2020open}, following the experimental setup in~\cite{liu2024pairwise}. In the experiments, papers from different countries are treated as distinct graph domains. Dataset statistics are summarized in Table~\ref{tab:mag}. Notably, the label distributions across domains exhibit substantial discrepancies, as shown in Table~\ref{tab:label_dist}, indicating a significant domain shift between the source and target graphs.

The GDA results are shown in Table~\ref{tab:mag_results}. Our method clearly excels baseline methods.

\begin{table}[t]
    \centering
    \caption{Statistics of MAG datasets}
    \label{tab:mag}
    \scalebox{0.9}{
    \begin{tabular}{c|cccc}
        \toprule
        Dataset & \#Nodes & \#Edges & \#Attributes & \multicolumn{1}{c}{\#Labels} \\
        \midrule
        MAG\_FR & 29,262 & 78,222 & 128 & 20 \\
        MAG\_RU & 32,833 & 67,994 & 128 & 20 \\
        MAG\_JP & 37,498 & 90,944 & 128 & 20 \\
        \bottomrule
    \end{tabular}
    }
\end{table}

\begin{table*}[t]
\vskip 1em
\caption{Label distribution (\%) in source (FR) and target domains (RU)}
\label{tab:label_dist}
\centering
\scalebox{0.85}{
\begin{tabular}{|c|*{20}{c|}c|}
\hline
\textbf{Label} & 0 & 1 & 2 & 3 & 4 & 5 & 6 & 7 & 8 & 9 & 10 & 11 & 12 & 13 & 14 & 15 & 16 & 17 & 18 & 19 & Total \\
\hline
\textbf{Source (\%)} & 2.59 & 4.42 & 4.18 & 2.98 & 3.35 & 2.44 & 11.09 & 1.64 & 2.43 & 2.49 & 1.73 & 1.08 & 2.19 & 0.57 & 0.87 & 1.16 & 1.05 & 0.89 & 0.43 & 52.44 & 100.00 \\
\hline
\textbf{Target (\%)} & 0.38 & 1.24 & 1.77 & 0.87 & 1.29 & 0.94 & 0.39 & 1.05 & 1.21 & 0.80 & 0.50 & 1.15 & 0.31 & 0.72 & 0.45 & 0.60 & 0.53 & 0.38 & 0.27 & 85.16 & 100.00 \\
\hline
\end{tabular}}
\end{table*}

\begin{table*}[t]
    \centering
    \vskip 1em
    \caption{Performance (\%) on MAG domain adaptation tasks}
    \label{tab:mag_results}
    \scalebox{0.9}{
    \begin{tabular}{c|ccccccccc}
        \toprule
        Task & DANN & AdaGCN & UDAGCN & StruRW & SpecReg & GRADE & A2GNN & PairAlign &DTW \\
        \midrule
        $FR \rightarrow RU$ & 35.07 & 77.34 & 78.45 & 77.48 & 75.33 & 81.75 & 68.51 & 20.76 & \textbf{85.15} \\
        $FR \rightarrow JP$ & 33.98 & 55.71 & 55.38 & 56.54 & 56.20 & 57.71 & 58.55 & 26.00 & \textbf{60.05} \\
        \bottomrule
    \end{tabular}
    }
\end{table*}

\subsection{Ablation of high-homophily and low-homophily graphs}\label{app:homo}
To test sensitivity to homophily, we consider the Task $A \rightarrow D$ in Citation dataset. We randomly remove edges in both source and target domains and have four variations: 
\begin{itemize}
    \item Remove 10\% edges connecting nodes of different labels.
    \item Remove 10\% edges connecting nodes of same labels.
    \item Remove 30\% edges connecting nodes of different labels.
    \item Remove 30\% edges connecting nodes of same labels.
\end{itemize}
We compare our method with A2GNN (the second-best performing method), as shown in Table~\ref{tab:homophily_effect}. The results indicate that variations in homophily have a relatively limited impact on performance.
\begin{table*}[t]
\caption{Effect of modifying homophily levels on DFT and A2GNN performance}
\label{tab:homophily_effect}
\centering
\begin{tabular}{c|l|c}
\textbf{Setting} & \textbf{Description} & \textbf{DFT/A2GNN (Micro-F1)} \\
\hline
Baseline & Original graph without modification & $74.84 / 66.71$ \\
\hline
Small ↑ Homophily & Remove 30\% of heterophilous edges & $73.84 / 65.64$ \\
\hline
Large ↑ Homophily & Remove 50\% of heterophilous edges & $72.93 / 65.71$ \\
\hline
Small ↓ Homophily & Remove 30\% of homophilous edges & $71.69 / 64.66$ \\
\hline
Large ↓ Homophily & Remove 50\% of homophilous edges & $71.33 / 61.25$ \\
\end{tabular}
\end{table*}

\subsection{Runtime}\label{app:runtime}
We record the average runtime per epoch for the scenario D $\rightarrow$ C using each baseline method, as shown in Table~\ref{tab:time}.

\begin{table*}[t]
    \centering
    \vskip 1em
    \caption{Average time each epoch (s/epoch)}
    \label{tab:time}
    
    \scalebox{1.0}{
    \begin{tabular}{ccccccc}
        \toprule
          & D $\rightarrow$ C & C $\rightarrow$ D & A $\rightarrow$ C & C $\rightarrow$ A & D $\rightarrow$ A & A $\rightarrow$ D \\
        \midrule
        DANN&   $0.0851\pm 0.0017$ &$0.0638\pm 0.0007$ &$0.0900\pm 0.0009$ &$0.0975\pm 0.0009$& $0.0927\pm 0.0057$ & $0.0642\pm 0.0010$ \\
        UDAGCN& $0.2458\pm 0.0025$& $0.2177\pm 0.0031$ & $0.2937\pm 0.0044$ &$0.3002\pm 0.0167$ & $0.2443\pm 0.0072$ & $0.2174\pm 0.0010$  \\
        AdaGCN& $0.2526\pm 0.0023$& $0.2263\pm 0.0043$& $0.2686\pm 0.0062$& $0.2727\pm 0.0035$& $0.2549\pm 0.0034$& $0.2311\pm 0.0018$ \\
        StruRW& $0.1094\pm 0.0008$& $0.1052\pm 0.0013$& $0.1257\pm 0.0009$ &$0.1255\pm 0.0011$& $0.1109\pm 0.0012$ & $0.1090\pm 0.0006$  \\
        GRADE & $0.0886\pm 0.0013$& $0.0641\pm 0.0002$& $0.0938\pm 0.0017$& $0.0998\pm 0.0016$& $0.0926\pm 0.0013$& $0.0645\pm 0.0004$  \\
        SpecReg&$0.1378\pm 0.0063$& $0.1213\pm 0.0004$& $0.1488\pm 0.0016$& $0.1509\pm 0.0002$& $0.1494\pm 0.0015$& $0.1216\pm 0.0009$  \\
        A2GNN& $0.0613\pm 0.0008$&$0.0596\pm 0.0012$ &$0.0670\pm 0.0017$&$0.0678\pm 0.0005$&$0.0623\pm 0.0006$&$0.0584\pm 0.0009$\\ 
        PairAlign&$0.2705\pm 0.0019$ &$0.2750\pm 0.0023$ &$0.3370\pm 0.0049$ &$0.3363\pm 0.0034$ &$0.2766\pm 0.0025$ &$0.2834\pm 0.0053$\\
        DFT& $0.6156\pm 0.0028$& $0.6168\pm 0.0040$& $0.8424\pm 0.0270$& $0.8188\pm 0.0076$&$0.6287\pm 0.0025$& $0.6255\pm 0.0095$ \\
        \bottomrule
    \end{tabular}}
\end{table*}

\subsection{Feature Visualization for A $\rightarrow$ C}\label{app:atoc}
We present feature visualization for the scenario A $\rightarrow$ C in Figure~\ref{fig:acm_citation}. The corresponding ICDR has been presented in the main text.

\begin{figure*}[t]
  \centering
  \subfigure[DANN]{
    \label{fig:subfig:b2}
    \includegraphics[width=0.3\textwidth]{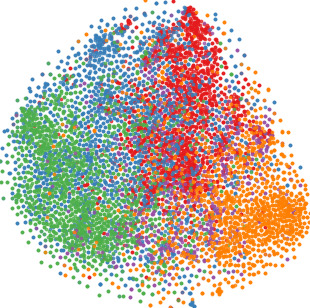}
  }
  \hfill
  \subfigure[UDAGCN]{
    \label{fig:subfig:c2}
    \includegraphics[width=0.3\textwidth]{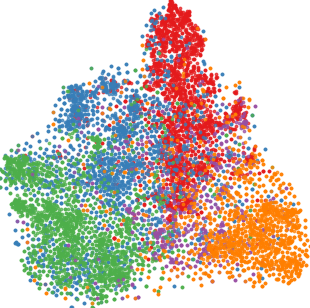}
  }
  \hfill
  \subfigure[AdaGCN]{
    \label{fig:subfig:d2}
    \includegraphics[width=0.3\textwidth]{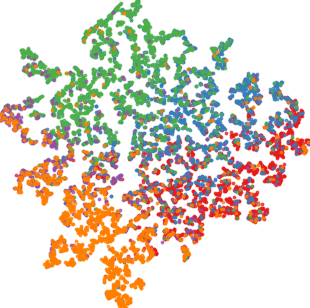}
  }\\
  \subfigure[GRADE]{
    \label{fig:subfig:e2}
    \includegraphics[width=0.3\textwidth]{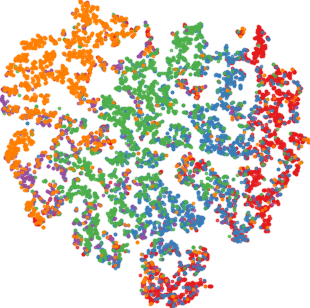}
  }
  \hfill
  \subfigure[StruRW]{
    \label{fig:subfig:f2}
    \includegraphics[width=0.3\textwidth]{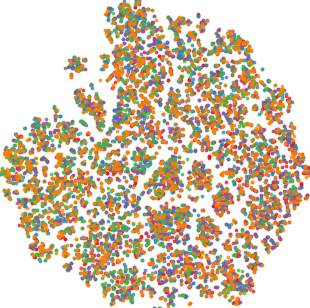}
  }
  \hfill
  \subfigure[SpecReg]{
    \label{fig:subfig:g2}
    \includegraphics[width=0.3\textwidth]{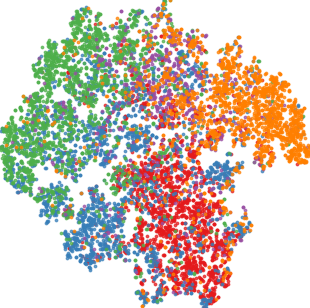}
  }\\
  \subfigure[A2GNN]{
    \label{fig:subfig:h2}
    \includegraphics[width=0.3\textwidth]{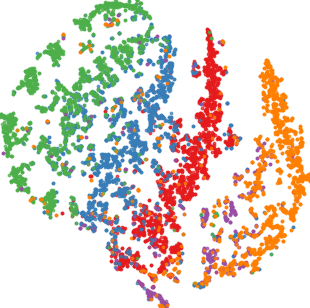}
  }
  \hfill
  \subfigure[PairAlign]{
    \label{fig:subfig:i2}
    \includegraphics[width=0.3\textwidth]{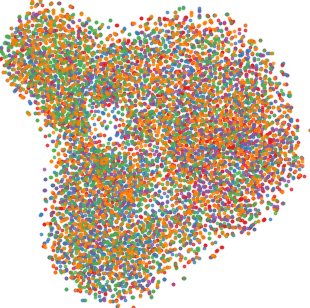}
  }
  \hfill
  \subfigure[DFT]{
    \label{fig:subfig:a2}
    \includegraphics[width=0.3\textwidth]{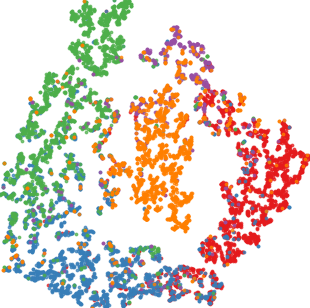}
  }
  \caption{Visualization of the representations learned in all methods (A $\to$ C).}
  \label{fig:acm_citation}
\end{figure*}

\end{document}